\DeclareMathOperator*{\argmin}{arg\,min}
\DeclareMathOperator*{\argmax}{arg\,max}
\theoremstyle{plain}
\newtheorem{theorem}{Theorem}[section]
\newtheorem{lemma}[theorem]{Lemma}
\theoremstyle{definition}
\newtheorem{definition}[theorem]{Definition}
\newtheorem{assumption}[theorem]{Assumption}
\theoremstyle{remark}
\icmltitlerunning{UMD: Unsupervised Model Detection for X2X Backdoor Attacks}
\begin{document}
	
\twocolumn[
\vspace{-0.185in}
\icmltitle{UMD: Unsupervised Model Detection for X2X Backdoor Attacks}




\begin{icmlauthorlist}
\vspace{-0.145in}
	\icmlauthor{Zhen Xiang}{uiuc}
	\icmlauthor{Zidi Xiong}{uiuc}
	\icmlauthor{Bo Li}{uiuc}
\vspace{-0.1in}
\end{icmlauthorlist}

\icmlaffiliation{uiuc}{University of Illinois at Urbana-Champaign}

\icmlcorrespondingauthor{Zhen Xiang}{zxiangaa@illinois.edu}
\icmlcorrespondingauthor{Bo Li}{lbo@illinois.edu}

\icmlkeywords{Machine Learning, ICML}

\vskip 0.3in
]



\printAffiliationsAndNotice{}  

\begin{abstract}
\vspace{-0.025in}
Backdoor (Trojan) attack is a common threat to deep neural networks, where samples from one or more {\it source classes} embedded with a backdoor trigger will be misclassified to adversarial {\it target classes}.
Existing methods for detecting whether a classifier is backdoor attacked are mostly designed for attacks with a single adversarial target (e.g., all-to-one attack).
To the best of our knowledge, without supervision, no existing methods can effectively address the more general X2X attack with an arbitrary number of source classes, each paired with an arbitrary target class.
In this paper, we propose UMD, the {\it first} {\underline{U}nsupervised} {\underline{M}odel} {\underline{D}etection} method that effectively detects X2X backdoor attacks via a joint inference of the adversarial (source, target) class pairs.
In particular, we first define a novel {\it transferability} statistic to measure and select a subset of putative backdoor class pairs based on a proposed clustering approach.
Then, these selected class pairs are jointly assessed based on an aggregation of their reverse-engineered trigger size for detection inference, using a robust and unsupervised anomaly detector we proposed.
We conduct comprehensive evaluations on CIFAR-10, GTSRB, and Imagenette dataset, and show that our \textit{unsupervised} UMD outperforms SOTA detectors (even with supervision) by 17\%, 4\%, and 8\%, respectively, in terms of the detection accuracy against diverse X2X attacks.
We also show the strong detection performance of UMD against several strong adaptive attacks.
\vspace{-0.175in}
\end{abstract}

\section{Introduction}
\label{sec:introduction}
\vspace{-0.025in}

Despite the success of deep neural networks in many applications, they are vulnerable to adversarial attacks such as backdoor (Trojan) attacks \cite{Review, BAsurvey}.
A classical backdoor attack is usually specified by one or more source classes, a target class, and a backdoor trigger, such that test samples from any source class embedded with the trigger will be misclassified to the target class; while samples without the trigger will be correctly classified \cite{BadNet}.
Typically, a backdoor attack is launched by poisoning the training set of the classifier \cite{Targeted, Clean_Label_BA, Haoti, Liu2020Refool, nguyen2021wanet, li_ISSBA_2021}.

\begin{figure}[t]
\centering
\includegraphics[width=.92\columnwidth]{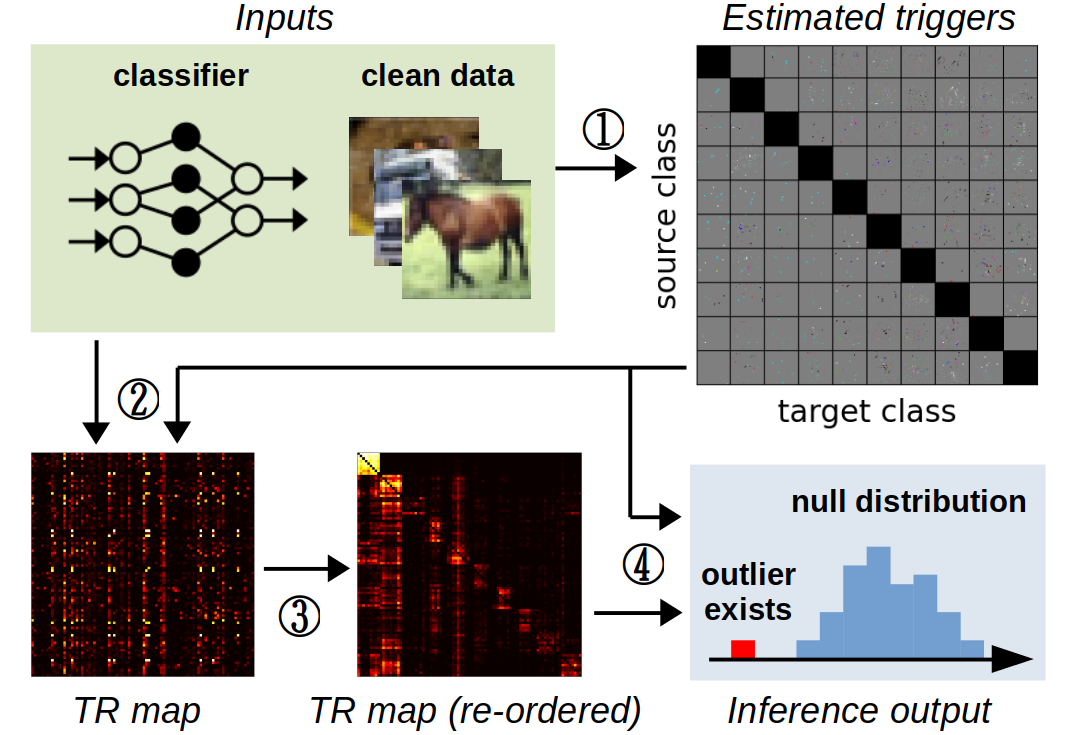}
\vspace{-0.05in}
\caption{Outline of UMD: \textcircled{1} reverse-engineer a trigger for each class pair (Sec. \ref{subsubsec:tr_def}); \textcircled{2} compute TR for all ordered pair of class pairs (Sec. \ref{subsubsec:tr_def}); \textcircled{3} select a subset of putative backdoor class pairs based on TR (Sec. \ref{subsubsec:select_pair}); \textcircled{4} inspect the selected pairs by unsupervised anomaly detection on trigger statistics (Sec. \ref{subsubsec:detection_inference}).}\label{fig:outline}
\vspace{-0.15in}
\end{figure}

Recently, many approaches have been proposed to detect whether a trained classifier is backdoor attacked {\it without access to} the training set or any benign models for supervision (e.g. to set a detection threshold) \cite{DeepInspect, Tabor, Post-TNNLS, DataLimited, NC_blackbox, Competition}.
These methods mainly fall into either a family of reverse-engineering-based detectors (\textbf{RED}s) or a category of meta-classification-based detectors (\textbf{MCD}s).
Typically, REDs reverse-engineer putative triggers for anomaly detection \cite{NC, ABS}, while MCDs train a binary meta classifier on a large number of shadow classifiers with and without attack for detection \cite{META}.
These detectors are effective against the classical backdoor attack, but they may be bypassed by more advanced backdoor attacks recently proposed to defeat them \cite{nguyen2020inputaware, label_smoothed}.

In this paper, we consider \textit{X2X backdoor attacks}, which refer to a broad family of backdoor attacks with an arbitrary number of source classes, each assigned with an arbitrary target class.
Thus, the X2X attack includes many popular attack types such as the ``all-to-one'' attack, ``X-to-one'' attack \cite{Shen2021BackdoorSF}, ``one-to-one'' attack \cite{SS}, and ``all-to-all'' attack \cite{BadNet}.
Unlike other advanced attacks that mostly rely on additional assumptions such as full control of the training process \cite{Zhao_2022_CVPR, Wang_2022_CVPR}, X2X attacks can be easily launched by poisoning the training set just like classical backdoor attacks.
Moreover, to the best of our knowledge, X2X attacks are {\it not detectable} by existing methods -- REDs are mostly designed for ``all-to-one'' attacks only, while MCDs need to assume access to the attack setting to train the shadow classifiers.

To bridge this gap, we propose an {\underline{U}nsupervised} {\underline{M}odel} {\underline{D}etection} approach (UMD) to detect X2X attacks and infer all the class pairs involved in the attack, without any assumptions about the number of source classes or the target class assignment rules.
UMD first reverse-engineers a putative trigger for each class pair. 
Unlike existing detectors that directly use trigger statistics (e.g., the perturbation size of the triggers) for anomaly detection, we calculate a {\it transferability} (TR) statistic for each ordered pair of class pairs.
The TR statistics are then used to select a subset of class pairs that are most likely involved in an X2X attack by solving a novel clustering problem.
Finally, an {\it unsupervised}, bias-reduced anomaly detector is designed to robustly assess the atypicality of the trigger statistic aggregated over the selected class pairs -- these class pairs are deemed the backdoor class pairs when an attack is detected.
Our contributions in this paper are summarized as the following:
\vspace{-0.1in}
\begin{itemize}[leftmargin=*]
\setlength\itemsep{-0.25em}
\item We propose UMD, the \textit{first} unsupervised model detector against X2X backdoor attacks with arbitrary numbers of source classes and arbitrary target class assignments.
\item We propose a statistic -- TR -- to identify backdoor class pairs. In particular, we prove that in ideal cases, TR from one backdoor class pair to another backdoor class pair is guaranteed to be no less than TR from a backdoor class pair to a non-backdoor class pair.
\item We propose a two-step inference procedure for UMD. First, a set of putative backdoor class pairs is selected based on the TR statistics by solving a novel clustering problem using an agglomerative algorithm.
Second, an aggregated trigger statistic based on these selected class pairs is evaluated for inference using our robust, unsupervised anomaly detector, with a confidence threshold adapted to the number of classes in the domain.
\item We conduct extensive experiments to show the strong detection capability of UMD against diverse X2X attacks and several strong advanced adaptive attacks.
We show that our unsupervised UMD outperforms SOTA baselines, including the ones with supervision by \citeauthor{ABS} (2019) and \citeauthor{Shen2021BackdoorSF} (2021) by 17\%, 4\%, and 8\% in the average model inference accuracy against various X2X attacks on CIFAR-10, GTSRB, and Imagenette, respectively.
\end{itemize}

\section{Related Work}\label{sec:related_work}

{\bf Backdoor attacks} While we focus on image classification 
in this paper like most existing works, backdoor attacks have also been extended to other data domains and/or learning paradigms \cite{language_backdoor, badnl, li2022few, Xie2020DBA, Yao_2019_CCS, jia2022badencoder}. For the image domain, apart from the X2X attack focused on in this paper, advanced backdoor attacks, such as clean-label attacks \cite{Clean_Label_BA, Hidden-trigger}, invisible-trigger attacks \cite{Haoti, nguyen2021wanet, Zhao_2022_CVPR, Wang_2022_CVPR}, and physical attacks \cite{Liu2020Refool}, are also proposed to achieve better stealthiness against possible human inspection of either the training set or test instances. Moreover, some advanced backdoor attacks are proposed, e.g., by \citeauthor{nguyen2020inputaware} (2020), \citeauthor{li_ISSBA_2021} (2021), and \citeauthor{xue2022imperceptible} (2022), to evade particular backdoor defenses. In addition to the X2X attack, we will show the effectiveness of our UMD against some of these advanced attacks (including their X2X extensions) as well.

{\bf Backdoor model detection} Existing methods that detect whether a trained classifier is backdoor attacked mainly fall into two categories. Reverse-engineering-based detectors (REDs) estimate putative triggers for anomaly detection \cite{NC, DeepInspect, Post-TNNLS, DataLimited, Shen2021BackdoorSF, taog2022better, hu2022trigger}. Meta-classification-based detectors (MCDs) train a meta classifier using shadow classifiers trained with and without attacks \cite{META, meta_sup}.
Unlike our UMD, these methods {\it cannot} effectively detect X2X attacks since they all rely on assumptions about the target class assignment.
Except for our UMD, several prior model detectors also involved the concept of ``\textit{transferability}''.
For example, transferability is defined at the instance level by \citeauthor{TwoClass} (2022) and \citeauthor{top} (2021), or for each putative target class to be inspected by \citeauthor{ABS} (2019).
Differently, the TR statistic used by our UMD is defined for each {\it ordered pair of class pairs}, which enables UMD to identify backdoor class pairs regardless of the target class assignment.

{\bf Other types of backdoor defense} Backdoor mitigation approaches aim to remove the learned backdoor mapping from a trained classifier \cite{FP, ANP, NAD, ShapPruning, CLP, zeng2022adversarial}.
They usually require a large number of samples and may degrade the clean accuracy of the classifier.
Training-phase defenses aim to obtain a backdoor-free classifier from the possibly poisoned training set \cite{SS, AC, CI, Differential_Privacy, huang2022backdoor}.
They can not be deployed at the user end where the classifier is already trained.
Inference-stage defenses detect whether a test sample is embedded with a backdoor trigger \cite{STRIP, Februus, SentiNet}.
They require test samples with the actual backdoor trigger, which are unavailable for our detection problem.
Thus, we will not further discuss these methods.

\vspace{-0.05in}
\section{Threat Model}\label{sec:threat_model}
\vspace{-0.025in}

X2X backdoor attacks refer to a family of backdoor attacks with arbitrary numbers of source classes each assigned with an arbitrary target class.
It covers many popular attacks with different settings including  the ``all-to-one'' (A2O) attack \cite{Targeted}, ``X-to-one'' (X2O) attack \cite{Shen2021BackdoorSF} (a.k.a. a ``partial backdoor'' \cite{NC}), ``one-to-one'' (O2O) attack \cite{SS}, and ``all-to-all'' (A2A) attack \cite{BadNet}.
The complete taxonomy of X2X backdoor attacks is shown in Fig. \ref{fig:venn_map}.
Formally, for a classification task with sample space ${\mathcal X}$ and label space ${\mathcal Y}$, an X2X backdoor attack can be defined as the following:

\begin{definition}\label{def:target_specific_attack}
({\bf X2X Backdoor Attack}) {\it An X2X backdoor attack against a victim classifier $f:{\mathcal X}\rightarrow{\mathcal Y}$ is specified by a trigger embedding function $\delta:{\mathcal X}\rightarrow{\mathcal X}$ and a subset ${\mathcal A}\subset{\mathcal Y}\times{\mathcal Y}$ of backdoor class pairs, satisfying: {\bf (1)} $\forall a=(s, t)\in{\mathcal A}$, $s\neq t$,  {\bf (2)} if $|{\mathcal A}|>1$\footnote{For attacks with only one backdoor class pair, i.e. $|{\mathcal A}|=1$, our method is still effective empirically (see Sec. \ref{subsec:exp_adaptive}) due to a ``collateral damage'' phenomenon observed by \citeauthor{Post-TNNLS} (2020).}, for any $a_i=(s_i, t_i)\in{\mathcal A}$ and $a_j=(s_j, t_j)\in{\mathcal A}$, $s_i\neq s_j$ if $a_i\neq a_j$.
A (perfectly) successful X2X attack will: {\bf (a)} jointly minimize ${\mathbb E}_{P_{XY|a}}[l(Y, f(\delta(X)))]$ over both $\delta$ and $f$, $\forall a\in{\mathcal A}$, and {\bf (b)} jointly minimize ${\mathbb E}_{P_{XY|a}}[l(Y, f(X))]$ over $f$ for all class pairs $a=(s, t)$ with $s=t$ (i.e., high accuracy on clean samples), where $l:{\mathcal Y}\times{\mathcal Y}\rightarrow{\mathbb R}$ is the loss function of classifier $f$.}
\end{definition}

{\bf Notes:} In Def. \ref{def:target_specific_attack}, $P_{XY|a}$ is the joint distribution of (source class) sample $X\in{\mathcal X}$ and (target) label $Y\in{\mathcal Y}$ conditioned on class pair $a\in{\mathcal Y}\times{\mathcal Y}$.
In particular, for any $a=(s,t)$, the marginal distribution $P_{Y|a}$ is a singleton at $Y=t$, and $X$ only depends on $s$, i.e., $P_{XY|a}(x,y)=P_{X|s}(x)\cdot\mathds{1}[y=t]$ for any $x\in\mathcal{X}$ and $y\in\mathcal{Y}$ where $\mathds{1}[\cdot]$ is the indicator function.
Thus, goal (a) can be achieved {\it only if} condition (2) holds; otherwise, there will be at least two class pairs in ${\mathcal A}$ with conflict minimization objectives.
Moreover, although $l$ can be any legitimate loss function for classification, for simplicity, in this paper, we consider the {\it 0-1 loss} with $l(Y_1, Y_2)=0$ if $Y_1=Y_2$ and $l(Y_1, Y_2)=1$ otherwise.
Finally, we do not specify the form of $\delta$ here, since our UMD is applicable to a variety of trigger types -- (e.g.) (1) \textit{image perturbation trigger} embedded by $\delta(X)=[X+v]_{\rm c}$, where $v$ is a small perturbation and $[\cdot]_{\rm c}$ is a clipping function, and (2) a \textit{patch trigger} embedded by $\delta(X)=(1-m)\odot X + m\odot u$, where $u$ is a small image patch, $m$ is a binary mask, and $\odot$ represents element-wise multiplication.

By definition, X2X attacks are different from the N2N attacks proposed by \citeauthor{xue2022imperceptible} (2022).
The latter refers to backdoor attacks with multiple triggers, each associated with a unique target class, which can be viewed as the joint deployment of multiple A2O attacks \cite{xue2022O2NN2O}.
By contrast, X2X attacks use a single trigger, with the main focus on different configurations of the (source, target) class pairs.
In Sec. \ref{subsec:exp_adaptive}, we show that UMD (with trivial generalization) can  easily detect N2N attacks.

In practice, X2X attacks can be easily launched by poisoning the training set of $f$, with $\delta$ prescribed by the attacker.
For many choices of $\delta$ (even without optimization), both (a) and (b) in Def. \ref{def:target_specific_attack} can be achieved by only optimizing over $f$ during the training.
Thus, the attacker does not need access to the training process, which is required by many other advanced attacks.
Moreover, X2X attacks are not detectable by existing methods without supervision.
REDs mostly assume that the attack is A2O \cite{NC}. 
MCDs need to train shadow models for a variety of attack settings, which cannot effectively cover all possible backdoor class pair configurations for X2X attacks \cite{META}.
Thus, we propose UMD (introduced next) to close this gap.

\begin{figure}[t]
\centering
\includegraphics[width=.52\columnwidth]
{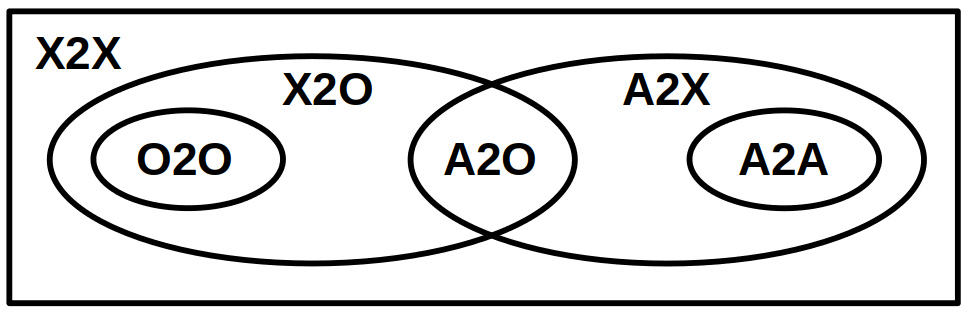}
\vspace{-0.1in}
\caption{Venn map for the family of X2X attacks, with `X' for ``arbitrary'', `A' for ``all'', and `O' for ``one''.}\label{fig:venn_map}
\vspace{-0.2in}
\end{figure}

\vspace{-0.05in}
\section{Method}\label{sec:method}
\vspace{-0.025in}

Next, we will first provide a formal problem statement for model detection against backdoor attacks, and then provide an overview of our proposed detection approach UMD, followed by a detailed introduction of UMD procedures.

{\bf Model detection problem}
For any potentially backdoor attacked classifier $f:{\mathcal X}\rightarrow{\mathcal Y}$, a defender aims to detect (without supervision) whether $f$ is backdoor attacked and infer all the backdoor class pairs (i.e.  set ${\mathcal A}$).
Similar to the importance of the target class inference for A2O attacks \cite{Competition}, for X2X attacks, the detected class pairs can be used to ``fix'' the classifier by ``unlearning'' the backdoor on these class pairs \cite{NC}.
The defender is assumed with the following {\it constraints}: (1) does not know \textit{a priori} if $f$ is attacked or not;
(2) no access to the training set or any samples embedded with the backdoor trigger;
(3) no access to any benign classifiers for reference (otherwise, one can use the benign classifier for the task);
(4) no prior knowledge about the number of backdoor class pairs or the assignment rules for the target classes.
Thus, the detection problem is \textit{unsupervised} due to the unavailability of both models with and without a backdoor.
Commonly, the defender is allowed to possess a small dataset ${\mathcal D}_{\rm c}$ containing clean samples for detection \cite{NC}.

{\bf Overview of UMD} To address the unavailability of the true backdoor trigger, UMD first reverse-engineers a putative trigger for each class pair using samples in ${\mathcal D}_{\rm c}$.
Different from prior works (e.g. \citeauthor{NC} (2019)) that assume an A2O attack and perform trigger reverse-engineering for each putative backdoor {\it target class}, our design makes class-pair-wise inference possible.
However, as a result, the premise behind those prior works -- the (image) trigger estimated for the backdoor target class will have a much smaller perturbation size than for all the other classes -- cannot be extended to our method with class-pair-wise trigger reverse-engineering.
Indeed, when there is an attack, the estimated trigger for the backdoor class pairs in ${\mathcal A}$ will have a small perturbation size due to the nature of the attacks.
But when there is no attack, the estimated trigger for some non-backdoor class pairs may also have a small perturbation size -- this is called an ``{\it intrinsic backdoor}'' (a.k.a. natural backdoor) \cite{PCBD, taog2022better}, which easily causes a false detection if the class-pair-wise perturbation size statistics are directly used for inference (as will be shown by our experiments in Sec. \ref{subsec:exp_ablation}).
To avoid such false detection, we propose a statistic ``{\it transferability}'' (TR), which is defined for each {\it ordered pair of class pairs} based on the reverse-engineered trigger (Sec. \ref{subsubsec:tr_def}).
We show that in ideal cases, TR from a backdoor class pair to another backdoor class pair is guaranteed to be no less than TR from a backdoor class pair to a non-backdoor class pair (Sec. \ref{subsubsec:tr_prop}).
Based on this property, UMD selects a subset of putative backdoor class pairs using the TR estimated for all ordered class pairs, by solving a proposed optimization problem (Sec. \ref{subsubsec:select_pair})
Then, an aggregation of the perturbation size statistics over all the selected class pairs is assessed by an {\it unsupervised}, bias-reduced anomaly detector (Sec. \ref{subsubsec:detection_inference}).
In summary, the set of class pairs being detected should have: (a) a large TR to any other class pair in the set and a small TR to any class pair not in the set, and (b) a small perturbation size for the reverse-engineered trigger.
The pipeline of our UMD is illustrated in Fig. \ref{fig:outline} and summarized by Alg. \ref{alg:graph_opt}.

\vspace{-0.05in}
\subsection{Transferability}\label{subsec:mt}
\vspace{-0.025in}

\subsubsection{Definition}\label{subsubsec:tr_def}
\vspace{-0.025in}

As motivated above, the TR statistic is defined for each ordered pair of class pairs based on the reverse-engineered trigger.
Since neither TR nor any part of the UMD pipeline is limited to any objective function or algorithm for trigger reverse-engineering, we define a {general} form for the trigger reverse-engineering problem as the following.
That is, for each $a=(s, t)\in{\mathcal Y}\times{\mathcal Y}$ $(s\neq t)$, we solve:
\begin{equation}\label{eq:re_main}
	\begin{aligned}
		& \underset{\delta}{\text{minimize}}
		& & {\mathbb E}_{P_{X|a}}[d(X, \delta(X))]\\
		& \text{s.t.}
		& & \delta \in \argmin_{\delta'} {\mathbb E}_{P_{XY|a}}[l(Y, f(\delta'(X)))]
	\end{aligned}
\end{equation}
Here $d:{\mathcal X}\times{\mathcal X}\rightarrow{\mathbb R}$ is a distance metric with respect to the trigger type, e.g. the $\ell_2$ norm $d(X, \delta(X))=||X-\delta(X)||_2$ for image perturbation triggers \cite{Post-TNNLS}.
The distance is minimized since image triggers are typically designed to be human-imperceptible.
Moreover, if $a$ is a backdoor class pair, the set of $\delta$ satisfying the constraint of \eqref{eq:re_main} will include the true backdoor trigger due to the goal (a) of the attacker in Def. \ref{def:target_specific_attack}.
Empirically, for each $a=(s, t)$, problem \eqref{eq:re_main} can be solved on clean samples in ${\mathcal D}_{\rm c}$ from class $s$ \cite{NC}.
Denoting the reverse-engineered trigger (i.e. the optimal solution to \eqref{eq:re_main}) for each class pair $a$ by $\delta_a$, we define the TR statistic as the following:
\begin{definition}\label{def:tr}
({\bf Transferability (TR)}) {\it For any class pair $a_i=(s_i, t_i)$, $s_i\neq t_i$, with a reverse-engineered trigger $\delta_{a_i}$, and 0-1 loss $l(\cdot, \cdot)$, TR from $a_i$ to any other class pair $a_j=(s_j, t_j)$, $a_j\neq a_i$ and $s_j\neq t_j$, is defined by:
\begin{equation}\label{eq:mt_definition}
    T_{a_ia_j} \triangleq\, 1 - {\mathbb E}_{P_{XY|a_j}}[l(Y, f(\delta_{a_i}(X)))].
\end{equation}}
\end{definition}
Based on the notes below Def. \ref{def:target_specific_attack}, the expectation in Eq. \eqref{eq:mt_definition} is equivalent to ${\mathbb E}_{P_{X|s_j}}[l(t_j, f(\delta_{a_i}(X)))]$.
Thus, empirically, $T_{a_ia_j}$ can be estimated using the clean samples from class $s_j$ in ${\mathcal D}_{\rm c}$ and the trigger reverse-engineered for class pair $a_i$.
The form of Eq. \eqref{eq:mt_definition} is chosen for $l$ being 0-1 loss with the value of TR scaled to $[0, 1]$ for simplicity, though other forms can be adopted for different choices of the loss function.
In plain language, $T_{a_ia_j}$ represents the misclassification rate to class $t_j$ when the trigger $\delta_{a_i}$ (estimated for class pair $a_i$) is applied to examples from class $s_j$.

\vspace{-0.025in}
\subsubsection{Property}\label{subsubsec:tr_prop}
\vspace{-0.025in}

Next, we show that TR is intrinsically suitable for identifying backdoor class pairs.
Consider an {\it arbitrary} set of class pairs ${\mathcal A}'=\{a_1, \cdots, a_k\}$ satisfying both conditions (1) and (2) in Def. \ref{def:target_specific_attack}, and with $P_A(a)>0$ for $\forall a\in{\mathcal A}'$.
For any trigger embedding function $\delta$, we denote $X^{\delta}\triangleq\delta(X)$ as the random variable for samples with a trigger embedded by $\delta$.
Then, the set of Bayes classifiers \cite{bayes} for (optimal) estimation of $Y$ from $X^{\delta}$ can be written as:
\begin{equation}\label{eq:bayes_classifier}
{\mathcal F}^{\delta} = \{f\in{\mathcal F} | \mathbb{E}_{P_{X^{\delta}Y}} [l(Y, f(X^{\delta}))]=R^{\mathcal F}(Y|X^{\delta})\}
\end{equation}
where $P_{X^{\delta}Y}=P_{XY}\cdot P_{X^{\delta}|X}$ is the joint distribution of $X^{\delta}$ and $Y$, $l(\cdot, \cdot)$ is the classification loss (i.e. 0-1 loss here), ${\mathcal F}$ is the set of all legitimate classifiers\footnote{For example, all classifiers with the same architecture as the one to be inspected but with different parameter values.}, and
\begin{equation}\label{eq:bayes_risk}
	R^{\mathcal F}(Y|X^{\delta}) = \min_{f\in{\mathcal F}} \mathbb{E}_{P_{X^{\delta}Y}} [l(Y, f(X^{\delta}))]
\end{equation}
is the Bayes risk over all classifiers in ${\mathcal F}$ for estimating $Y$ from $X^{\delta}$.
Here, we assume the minimum always exists for simplicity.
Similarly, for each class pair $a\in{\mathcal A}'$, we denote the set of ``{\it class-pair-conditional}'' Bayes classifiers as ${\mathcal F}_a^{\delta}$ and the associated Bayes risk as $R^{\mathcal F}_a(Y|X^{\delta})$, by replacing $P_{X^{\delta}Y}$ in both Eq. \eqref{eq:bayes_classifier} and \eqref{eq:bayes_risk} with $P_{X^{\delta}Y|a}=P_{XY|a}\cdot P_{X^{\delta}|X}$.
These classifiers in ${\mathcal F}_a^{\delta}$ are optimal for predicting $Y$ from $X^{\delta}$, with $X^{\delta}$ and $Y$ both conditioned on the class pair $a$. Then, we have the following theorem for the transferability of reverse-engineered triggers:

\begin{theorem}\label{thm:main}
({\bf Optimal Transferability Condition}) {\it For any class pair $a\in{\mathcal A}'$, consider a trigger embedding function $\delta$ that minimizes $R^{\mathcal F}_a(Y|X^{\delta})$. Then, $\delta$ minimizes:
\begin{equation}\label{eq:main_lhs}
    \min_{f\in{\mathcal F}_a^{\delta}} \sum_{a'\in{\mathcal A}'\setminus a} P_{A|A\neq a}(a') \mathbb{E}_{P_{X^{\delta}Y|a'}} [l(Y, f(X^{\delta}))]
\end{equation}
if and only if $\delta$ also minimizes $R^{\mathcal F} (Y|X^{\delta})$.}
\end{theorem}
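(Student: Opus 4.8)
The plan is to strip the inner minimization over classifiers off both sides of the equivalence, so that each side becomes a property of the trigger $\delta$ alone, and then to show these two properties single out the same triggers. First I would use that $P_A$ is supported on $\mathcal{A}'$, so that $P_{X^\delta Y}=\sum_{a'\in\mathcal{A}'}P_A(a')\,P_{X^\delta Y|a'}$ and hence, for every classifier $f$, $\mathbb{E}_{P_{X^\delta Y}}[l(Y,f(X^\delta))]=\sum_{a'\in\mathcal{A}'}P_A(a')\,\mathbb{E}_{P_{X^\delta Y|a'}}[l(Y,f(X^\delta))]$. Splitting off the $a'=a$ term, using $P_{A|A\neq a}(a')=P_A(a')/(1-P_A(a))$ and that $f\in\mathcal{F}_a^\delta$ means $\mathbb{E}_{P_{X^\delta Y|a}}[l(Y,f(X^\delta))]=R_a^\mathcal{F}(Y|X^\delta)$, the quantity minimized inside \eqref{eq:main_lhs} equals
\[
\frac{\mathbb{E}_{P_{X^\delta Y}}[l(Y,f(X^\delta))]-P_A(a)\,R_a^\mathcal{F}(Y|X^\delta)}{1-P_A(a)}.
\]
Minimizing over $f\in\mathcal{F}_a^\delta$ and writing $\widetilde R_a^\delta:=\min_{f\in\mathcal{F}_a^\delta}\mathbb{E}_{P_{X^\delta Y}}[l(Y,f(X^\delta))]$, expression \eqref{eq:main_lhs} becomes $(\widetilde R_a^\delta-P_A(a)R_a^\mathcal{F}(Y|X^\delta))/(1-P_A(a))$. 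Since we restrict to $\delta$ minimizing $R_a^\mathcal{F}$, the term $R_a^\mathcal{F}(Y|X^\delta)$ is a constant $\rho_a^\ast:=\min_{\delta'}R_a^\mathcal{F}(Y|X^{\delta'})$ over the feasible $\delta$, so \eqref{eq:main_lhs} is a strictly increasing affine function of $\widetilde R_a^\delta$ (assuming $|\mathcal{A}'|\ge 2$ so that $0<P_A(a)<1$; the singleton case is trivial). Hence the theorem reduces to: among $\delta$ minimizing $R_a^\mathcal{F}$, the minimizers of $\widetilde R_a^\delta$ are exactly the minimizers of $R^\mathcal{F}(Y|X^\delta)$.

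Next I would record a common lower bound. Because $\mathcal{F}_a^\delta\subseteq\mathcal{F}$, we always have $\widetilde R_a^\delta\ge R^\mathcal{F}(Y|X^\delta)$, with equality iff $\mathcal{F}^\delta\cap\mathcal{F}_a^\delta\neq\emptyset$. Applying the mixture decomposition termwise inside the $\min_f$ and bounding each class-pair term below by its conditional Bayes risk $R_{a'}^\mathcal{F}(Y|X^\delta)$ and then by $\rho_{a'}^\ast$, one gets $R^\mathcal{F}(Y|X^\delta)\ge c$ for all $\delta$ and $\widetilde R_a^\delta\ge c$ for all feasible $\delta$, where $c:=\sum_{a'\in\mathcal{A}'}P_A(a')\rho_{a'}^\ast$. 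So it is enough to show that both $R^\mathcal{F}(Y|X^\delta)$ and $\widetilde R_a^\delta$ attain the floor $c$, and at the same triggers.

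This last step is the crux. The ``ideal case'' amounts to assuming there is a trigger $\delta^\ast$ with $R^\mathcal{F}(Y|X^{\delta^\ast})=c$ (for a perfectly successful X2X attack on $\mathcal{A}'$ every $\rho_{a'}^\ast=0$, so $c=0$), which conditions (1)--(2) of Def.~\ref{def:target_specific_attack} make possible by ruling out conflicting per-class-pair objectives. Granting this, any $\delta$ with $R^\mathcal{F}(Y|X^\delta)=c$ must make both inequalities in the lower-bound chain tight, i.e. (i) $\delta$ minimizes every $R_{a'}^\mathcal{F}$, and (ii) every global Bayes classifier $f^\ast\in\mathcal{F}^\delta$ lies in every $\mathcal{F}_{a'}^\delta$ (in particular in $\mathcal{F}_a^\delta$); then $\mathbb{E}_{P_{X^\delta Y}}[l(Y,f^\ast(X^\delta))]=\sum_{a'}P_A(a')R_{a'}^\mathcal{F}(Y|X^\delta)=c$ forces $\widetilde R_a^\delta\le c$, hence $\widetilde R_a^\delta=c$. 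Conversely, if $\widetilde R_a^\delta=c$ for a feasible $\delta$, the minimizing $f\in\mathcal{F}_a^\delta$ witnesses $R^\mathcal{F}(Y|X^\delta)\le c$, hence $=c$. Since $\delta^\ast$ shows both minima equal $c$, we conclude $\delta$ minimizes \eqref{eq:main_lhs} $\iff\widetilde R_a^\delta=c\iff R^\mathcal{F}(Y|X^\delta)=c\iff\delta$ minimizes $R^\mathcal{F}$, which is the claim.

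I expect the crux to be the only real obstacle: it demands that optimality of $\delta$ for the \emph{global} prediction of $Y$ from $X^\delta$ is inherited by every \emph{class-pair-conditional} prediction problem, \emph{and} that a single classifier witnesses all of them simultaneously. That is precisely where the ideal-case assumption (a successful attack on $\mathcal{A}'$, equivalently attainability of the floor $c$) and the no-conflict conditions (1)--(2) are used; without them only the one-sided bound $\widetilde R_a^\delta\ge R^\mathcal{F}(Y|X^\delta)$ survives and the ``iff'' can fail. Everything else — the mixture decomposition, the affine reduction, and the lower bounds — is routine bookkeeping.
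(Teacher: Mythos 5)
Your proof is correct and reaches the same conclusion, but by a genuinely different route from the paper's. The paper (Apdx.~\ref{sec:analysis}) introduces the end-to-end classifier space ${\mathcal G}=\Delta\times{\mathcal F}$, invokes the generalized data processing inequality (Lemma~\ref{lem:gdpi}) together with an attainability construction (Lemma~\ref{lem:gdpi_equality}) to identify $\min_{\delta}R^{\mathcal F}(Y|X^{\delta})$ with $R^{\mathcal G}(Y|X)$, proves a sandwiching result (Lemma~\ref{lem:sandwitch}) under a commonality assumption (Assumption~\ref{asm:commonality}), and then establishes sufficiency directly and necessity by a two-case contradiction. You dispense with ${\mathcal G}$ and the DPI entirely: your affine reduction of \eqref{eq:main_lhs} to $\bigl(\widetilde R_a^{\delta}-P_A(a)R_a^{\mathcal F}(Y|X^{\delta})\bigr)/(1-P_A(a))$ is a clean observation not made explicit in the paper, and it turns the theorem into the statement that $\widetilde R_a^{\delta}$ and $R^{\mathcal F}(Y|X^{\delta})$ share the floor $c=\sum_{a'}P_A(a')\rho_{a'}^{\ast}$ and attain it at the same triggers, with both directions argued directly. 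The one hypothesis you import --- attainability of the floor $c$ by some $\delta^{\ast}$ --- is exactly what the paper's Assumption~\ref{asm:commonality} combined with Lemma~\ref{lem:gdpi_equality} delivers ($\min_{\delta}R^{\mathcal F}(Y|X^{\delta})=R^{\mathcal G}(Y|X)=\sum_{a'}P_A(a')R_{a'}^{\mathcal G}(Y|X)=c$), so you have not assumed more than the paper does; you have merely localized the ``ideal case'' hypothesis to a single scalar identity rather than a structural assumption about Bayes classifiers. What your approach buys is economy and transparency (no auxiliary function space, no contradiction argument, and an explicit closed form for the objective being minimized); what the paper's buys is an information-theoretic framing in which the key assumption is stated in terms of classifiers and justified from separability of the clean data, which makes the ``when does the ideal case hold'' discussion more concrete.
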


\begin{proof}[Proof (sketch)]
First, we derive the lower bound of \eqref{eq:main_lhs} over $\delta$. Then, sufficiency is proved by showing that the lower bound will be achieved if $\delta$ minimizes $R^{\mathcal F} (Y|X^{\delta})$, while necessity is proved by showing that the lower bound cannot be achieved if $\delta$ does not minimize $R^{\mathcal F} (Y|X^{\delta})$ via contradiction. The complete analysis is shown in Apdx. \ref{sec:analysis}.
\end{proof}

{\bf Remarks:} For any class pair $a$ and classifier $f$, the reverse-engineered trigger satisfying the constraint of problem \eqref{eq:re_main} should also minimize $R^{\mathcal F}_a(Y|X^{\delta})$ if $f$ is a Bayes classifier conditioned on $a$.
Thus, based on goal (a) in Def. \ref{def:target_specific_attack}, $\delta$ considered by the theorem may be a trigger reverse-engineered for some backdoor class pair $a$ of a successful X2X attack.
In this case, based on Def. \ref{def:tr}, the conditional expectation in \eqref{eq:main_lhs} for each $a'\in{\mathcal A}'\setminus a$ represents one minus the TR statistic from $a$ to $a'$.
Thus, the theorem shows the condition for $\delta$ maximizing the expected TR from $a$ to all the other class pairs in ${\mathcal A}'$, which is that $\delta$ also minimizes $R^{\mathcal F} (Y|X^{\delta})$ -- the Bayes risk without any class-pair-conditioning.
Apparently, this optimal transfer condition holds if ${\mathcal A}'\subset{\mathcal A}$ contains only backdoor class pairs of a successful X2X attack, with $f$ being a Bayes classifier on ${\mathcal A}'$ and $\delta$ being the actual backdoor trigger.
Thus, for a perfectly successful attack and optimal trigger reverse-engineering, if we apply Thm. \ref{thm:main} to any set ${\mathcal A}'$ of two class pairs with at least one being a backdoor class pair, we will have the guarantee that {\it TR from a backdoor class pair to another backdoor class pair is no less than TR from a backdoor class pair to a non-backdoor class pair}.
Empirically, we will likely observe large TRs (possibly close to 1) for any ordered pair of class pairs in ${\mathcal A}'$ if the set is pure in backdoor class pairs.
Otherwise, there will likely be at least two class pairs in ${\mathcal A}'$ with a small TR from either direction.

\begin{algorithm}[t]
\small
\caption{UMD against X2X backdoor attacks}
\begin{algorithmic}[1]\label{alg:graph_opt}

\STATE {\bf Input:} a classifier $f$; a small, clean dataset ${\mathcal D}_{\rm c}$, a desired significance level $\beta$ for anomaly detection.

\STATE {\bf Compute TR statistics:}

\STATE Get $\delta_a$ by solving \eqref{eq:re_main} on ${\mathcal D}_{\rm c}$ for $\forall a=(s, t)\in{\mathcal Y}\times{\mathcal Y}\setminus{\mathcal B}$.

\STATE Compute $T_{a_ia_j}$ by Eq. \eqref{eq:mt_definition} on ${\mathcal D}_{\rm c}$ for $\forall a_i\in{\mathcal Y}\times{\mathcal Y}\setminus{\mathcal B}$ and $\forall a_j=(s_j, t_j)\in{\mathcal Y}\times{\mathcal Y}\setminus{\mathcal B}$, $a_j\neq a_i$.

\STATE {\bf Select a set $\hat{\mathcal A}$ of putative backdoor class pairs:}

\STATE Initialize $\hat{\mathcal A}_2 = \argmax_{\{a_i=(s_i,t_i), a_j=(s_j,t_j)\}, s_i\neq s_j} T_{a_ia_j}$.

\FOR{$n=3:|{\mathcal Y}|$}

\STATE $\hat{\mathcal A}_{n-1}^{\rm c}=\{a=(s, t)\notin\hat{\mathcal A}_{n-1} | s\neq s' \,\, \text{for} \,\, \forall a'=(s', t')\in\hat{\mathcal A}_{n-1}\}$.

\STATE $a^{\ast}=\argmax_{a\in\hat{\mathcal A}_{n-1}^{\rm c}} H(\hat{\mathcal A}_{n-1} \cup a)$.

\STATE $\hat{\mathcal A}_n = \hat{\mathcal A}_{n-1} \cup a^{\ast}$.

\ENDFOR

\STATE $n^{\ast} = \argmax_{n\in\{3, \cdots, |{\mathcal Y}|\}} H(\hat{\mathcal A}_n)$

\STATE $\hat{\mathcal A} = \hat{\mathcal A}_{n^{\ast}}$

\STATE {\bf Unsupervised anomaly detection:}

\STATE Compute $r$ on ${\mathcal D}_{\rm c}$ using all $\{\delta_a\}$ and $\hat{\mathcal A}$ by Eq. \eqref{eq:mad} and \eqref{eq:anomaly_score}.

\STATE Compute $\theta(\beta, N)$ for $N=|{\mathcal Y}\times{\mathcal Y}\setminus{\mathcal B}|-|\hat{\mathcal A}|$ by Eq. \eqref{eq:threshold}.

\STATE {\bf Output:} If $r>\theta(\beta, N)$, there is an attack with backdoor class pairs $\hat{\mathcal A}$; otherwise, there is no attack.

\end{algorithmic}
\end{algorithm}

\vspace{-0.05in}
\subsection{Detection Inference}\label{subsec:inference}
\vspace{-0.025in}

\subsubsection{Select Putative Backdoor Class Pairs}\label{subsubsec:select_pair}
\vspace{-0.025in}

Due to the absence of supervision, it is hard to choose a threshold on TR to identify the backdoor class pairs directly if there is any.
Moreover, a naive combination of TR with other statistics such as the perturbation size of the reverse-engineered trigger cannot effectively detect backdoor class pairs, while still causing a high false detection rate (as will be shown by our experiments in Sec. \ref{subsec:exp_ablation}).
Thus, we propose to use TR to select a set $\hat{\mathcal A}$ of putative backdoor class pairs for further inference.
Based on our analysis for TR, if there is an attack, we expect: {\bf (1)} a large TR for any ordered pair of class pairs in $\hat{\mathcal A}$, {\bf (2)} a small TR from any class pair in $\hat{\mathcal A}$ to class pairs outside $\hat{\mathcal A}$, {\bf (3)} $\hat{\mathcal A}$ satisfies the conditions in Def. \ref{def:target_specific_attack} for valid X2X attacks.
Accordingly, we propose to solve the following optimization problem:
\begin{equation}\label{eq:clustering}
\begin{aligned}
    & \underset{\hat{\mathcal A}\subset{\mathcal Y}\times{\mathcal Y}\setminus{\mathcal B}}{\text{maximize}}
    & & H(\hat{\mathcal A}) = \min_{a\in\hat{\mathcal A}} \frac{\sum_{a'\in\hat{\mathcal A}\setminus a} (T_{aa'} + T_{a'a})}{2(|\hat{\mathcal A}| - 1)}\\
    & & &- \max_{a\notin\hat{\mathcal A}} \frac{\sum_{a'\in\hat{\mathcal A}} T_{a'a}}{|\hat{\mathcal A}|}\\
    & \text{subject to}
    & & s \neq s', \forall a=(s, t), a'=(s', t') \in \hat{\mathcal A}
\end{aligned}
\end{equation}
where ${\mathcal B}=\{(s, t)\in{\mathcal Y}\times{\mathcal Y}|s=t\}$ is the set of all ``identical'' pairs.
Clearly, for problem \eqref{eq:clustering}, the two terms in the objective function and the constraint are designed to satisfy the requirements (1)-(3), respectively.
In particular, the second term of $H(\hat{\mathcal A})$ is critical in practice when the actual number of backdoor class pairs is unknown.
Without this term, we will likely obtain a parsimonious set $\hat{\mathcal A}$ of two class pairs with the top ``mutual-TR''.
Finally, we propose to solve problem \eqref{eq:clustering} using an agglomerative algorithm {\it without any hyperparameter}, as detailed by lines 5-13 of Alg. \ref{alg:graph_opt}.

\subsubsection{Unsupervised Anomaly Detection}\label{subsubsec:detection_inference}

Since $\hat{\mathcal A}$ will always be selected regardless of the presence of attack, we still need to infer whether $\hat{\mathcal A}$ is indeed a set of backdoor class pairs.
Inspired by previous works, we design an anomaly detector based on median absolute deviation (MAD) \cite{MAD}.
The anomaly detector uses the trigger perturbation/patch size $z_a\triangleq{\mathbb E}_{P_{X|a}}[d(X, \delta_a(X))]$ empirically estimated for each class pair $a=(s, t)$ on the clean samples ${\mathcal D}_{\rm c}$ as the detection statistic.
Under the null hypothesis of ``no attack'', all detection statistics are associated with non-backdoor class pairs and follow some null distribution characterized by the median statistic and MAD.
Different from prior works, our estimation of MAD (denoted by $\sigma$ below) is performed on $\forall a\notin\hat{\mathcal A}$ which are likely non-backdoor class pairs, i.e.:
\begin{equation}\label{eq:mad}
\sigma = {\rm med}_{a\notin\hat{\mathcal A}}(|z_a^{-1} - {\rm med}_{a'\notin\hat{\mathcal A}} z^{-1}_{a'}|)
\end{equation}
where ${\rm med}$ represents median. 
The reciprocal is taken such that the outlier statistics corresponding to small trigger sizes, if there are any, will stay at the tail of the null distribution.
Compared with other detectors that use all statistics to estimate MAD (since they do not select putative backdoor class pairs like us), our estimation will not suffer from the bias caused by the possible involvement of backdoor statistics.
Then, we assess the atypicality of $z_a$ for $\forall a\in\hat{\mathcal A}$ through aggregation using an anomaly score computed by:
\begin{equation}\label{eq:anomaly_score}
r = ({\rm med}_{a\in\hat{\mathcal A}} z_{a}^{-1} - {\rm med}_{a'\notin\hat{\mathcal A}} z_{a'}^{-1}) / (1.4826 \cdot \sigma)
\end{equation}
where the constant 1.4826 is a scaling factor such that the scaled MAD can be viewed as an analog to the standard deviation of the null distribution under Gaussian assumption \cite{rousseeuw-qn-1993}.
The aggregation, i.e. the median of $z_a^{-1}$ for $\forall a\in\hat{\mathcal A}$, helps to avoid false detection caused by any $a\in\hat{\mathcal A}$ with an outlier statistic (e.g. for an intrinsic backdoor) when there is actually no attack.
In summary, the anomaly score $r$ describes how many ``standard deviations'' the aggregated statistic is away from the median.

To test whether $r$ is an outlier to the underlying null distribution, we propose a method to determine a confidence threshold in adaption to the number of ``null statistics'', i.e. $N=|{\mathcal Y}\times{\mathcal Y}\setminus{\mathcal B}|-|\hat{\mathcal A}|$, which is largely dependent on the number of classes $|{\mathcal Y}|$.
Let $R_1, \cdots, R_N$ be i.i.d. random variables following some null density form, e.g., a standard Gaussian distribution in here.
It is easy to show that for any given $\Theta$, ${\rm Prob}(\max_{i=1,\cdots, N} R_i > \Theta)\rightarrow 1$ as $N\rightarrow\infty$.
In other words, with a constant threshold, a false detection will be easily made when $N$ is large.
Thus, we obtain a threshold $\theta(\beta, N)$ based on both a prescribed confidence level $1-\beta$ (e.g. $\beta=0.05$ by convention) and $N$ by solving $\theta$ from ${\rm Prob}(\max_{i=1,\cdots, N} R_i > \theta)\leq\beta$, which gives:
\begin{equation}\label{eq:threshold}
\theta(\beta, N) = \Phi^{-1} ((1-\beta)^{1/N})
\end{equation}
where $\Phi^{-1}$ is the inverse of the standard Gaussian CDF.
Then, if $r>\theta(\beta, N)$, we claim with confidence $1-\beta$ (a.k.a. $\beta$-significance) that the classifier is attacked with backdoor class pairs $\hat{\mathcal A}$; otherwise, no backdoor attack.

\vspace{-0.05in}
\section{Experiment}\label{sec:exp}
\vspace{-0.025in}

First, we show that our unsupervised UMD outperforms five SOTA baselines (even with supervision) by at least 17\%, 4\%, and 8\% on CIFAR-10, GTSRB, and Imagenette, respectively, in the average model inference accuracy against various X2X attacks.
Second, in our ablation study on CIFAR-10, we justify our design choices for UMD.
Third, we show that UMD can even detect X2X attacks with two advanced triggers and address four different types of adaptive attacks.
Finally, we show that the class pairs detected by UMD can be used to ``fix'' the backdoored model.

\vspace{-0.025in}
\subsection{Setup}\label{subsec:exp_main_setup}
\vspace{-0.025in}

\begin{table}[t!]
\vspace{-0.1in}
\setlength{\tabcolsep}{3.5pt}
\caption{Designed functionalities and detection capabilities of UMD compared with five SOTA baselines. UMD is the only unsupervised method against X2X attacks with pair inference. Empirically, UMD can also detect O2O attacks as shown in Tab. \ref{tab:detection_advanced}.}
\centering{
    \scalebox{0.8}{%
        \begin{tabular}{c|cccccc}
            \toprule \hline
            & NC & ABS & PT-RED & MNTD & K-Arm & UMD (ours)\\ \hline
            A2O & \checkmark & \checkmark & \checkmark & \checkmark & \checkmark & \checkmark\\ \hline
            O2O &  &  & \checkmark &  & \checkmark & $\triangle$ \\ \hline
            X2O &  &  & \checkmark &  & \checkmark & \checkmark\\ \hline
            A2Ar &  &  &  &  & & \checkmark\\ \hline
            A2X &  &  &  &  & & \checkmark\\ \hline
            X2X &  &  &  &  & & \checkmark\\\hline
            detect pairs &  &  &  &  & & \checkmark\\\hline
            unsupervised & \checkmark &  & \checkmark &  & & \checkmark\\ \hline\bottomrule
        \end{tabular}
    }
}
\label{tab:detection_capability}
\vspace{-0.15in}
\end{table}

{\bf Dataset:}
We consider three benchmark image datasets, CIFAR-10 \cite{CIFAR10}, GTSRB \cite{GTSRB}, and Imagenette \cite{ImageNet}, which contain color images (with resolution $32\times32$, $32\times32$ (resized), and $224\times224$, respectively) with 10, 43, and 10 classes, respectively.
In our experiments, we follow the standard train-test split for each dataset (see Apdx. \ref{subsec:exp_main_supp_dataset} for details).\\
{\bf Backdoor trigger:}
We consider two common triggers: 1) a large, perturbation-based trigger with a big `X' shape, and 2) a local patch trigger with a random color and a random location for each attack.
Examples of these triggers are shown in Fig. \ref{fig:trigger_example}, with more details in Apdx. \ref{subsec:exp_main_supp_trigger}.\\
{\bf Attack setting:}
We first consider the classical A2O attack addressed by most existing works for all three datasets.
The target class for each A2O attack is randomly selected.
Then we consider a general all-to-all (A2Ar) attack with a random bijection mapping between the source and target classes.
Note that the classical A2A attack by \citeauthor{BadNet} (2017) uses rotational target assignment and is a special case of the A2Ar attack considered here.
For each dataset, we also consider several X2X attack settings other than A2O and A2Ar.
On CIFAR-10, we consider 2to2, 5to5, and 8to8 attacks;
on GTSRB, we consider 20to20, 30to30, and 40to40 attacks;
on Imagenette, we consider 3to3, 5to5, and 8to8 attacks.
The backdoor class pairs for each X2X attack are randomly selected.
Moreover, for each attack on CIFAR-10, GTSRB, and Imagenette, we create 300, 70, and 200 poisoning instances per source class, respectively.\\
{\bf Training:}
For each attack setting on each dataset, we train 10 classifiers under attack with each of the two triggers respectively.
For the 8to8 and the A2Ar settings on Imagenette, the attacks with the patch trigger are mostly unsuccessful; thus, they are excluded from our experiments.
In total, our main evaluation of the detection performance involves ($(5\times3\times2 - 2\times1\times1)\times10=$) 280 classifiers being attacked.
For model architecture, we use ResNet-18 \cite{ResNet} for CIFAR-10 and Imagenette, and the winning model on the leaderboard \cite{GTSRB_Leaderboard} for GTSRB.
Detailed training configurations are shown in Apdx \ref{subsec:exp_main_supp_training}.
All the attacks we created are successful with attack success rates (ASRs) $>78\%$ and negligible degradation in clean test accuracy (ACC) (see Tab. \ref{tab:asr_acc} in Apdx. \ref{subsec:exp_main_supp_training}).\\
{\bf Evaluation metric:}
We define a \underline{m}odel \underline{i}nference \underline{a}ccuracy ({\bf MIA}) as the proportion of {\it correct inference} for a group of classifiers.
MIA is equivalent to the true positive rate (or one minus the false positive rate) if all classifiers in the group are attacked (or benign).
For each {\it true positive} model inference by UMD, we also define a \underline{p}air \underline{d}etection \underline{r}ate ({\bf PDR}) which is the proportion of backdoor class pairs being successfully detected.
Note that the false positive rate for pair inference (by incorrectly recognizing a non-backdoor class pair as a backdoor class pair) will always be small since UMD detects at most $K$ (out of $K(K-1)$) class pairs, where $K$ is the number of classes.
Thus, we neglect it for brevity.\\
{\bf Baselines:}
We compare our UMD with the following SOTA baselines, including Neural Cleanse (NC) \cite{NC}, ABS \cite{ABS}, PT-RED \cite{Post-TNNLS}, MNTD \cite{META}, and K-Arm \cite{Shen2021BackdoorSF}.
For a fair comparison, we set the confidence level for model inference to 95\% (i.e. 5\% {\it desired} false positive rate) for NC and PT-RED equipped with unsupervised threshold selection.
For ABS, MNTD, and K-Arm which require supervision to select the detection threshold, we set the overall {\it actual} false positive rate (for all datasets and settings) to 5\% while \textit{maximizing} their true positive rates for model inference.
The designed functionalities and detection capabilities of these methods are shown in Tab. \ref{tab:detection_capability}, compared with UMD. More details about these methods are shown in Apdx. \ref{subsec:exp_main_supp_review}.\\
\textbf{Experimental Details:}
For our UMD, we consider the trigger reverse-engineering algorithms used by PT-RED and NC, respectively, to cover both the perturbation trigger and the patch trigger.
That is, we execute Alg. \ref{alg:graph_opt} with both algorithms, and a classifier is deemed to be attacked if any of the two executions claim a detection.
In particular, PT-RED assumes that the trigger is an additive image perturbation incorporated by $\delta(x)=[x+v]_{\rm c}$ with a small $||v||_2$, where $[\cdot]_{\rm c}$ is a clipping function \cite{Post-TNNLS}.
Its reverse engineer algorithm is similar to the way to generate a universal adversarial perturbation \cite{DeepFool_Univ} -- for any class pair $(s, t)$, a perturbation $v$ is initialized to zero and updated using gradient-based approaches, until a high misclassification fraction from class $s$ to class $t$ is achieved. 
NC assumes a patch trigger $u$ embedded by $\delta(x)=(1-m)\odot x + m\odot u$ using a binary mask $m$ with a small patch size $||m||_1$, where $\odot$ represents element-wise multiplication \cite{NC}.
The reverse engineering algorithm of NC also solves an optimization problem for each class pair $(s, t)$ to achieve a high misclassification fraction from class $s$ to class $t$ while minimizing the patch size $||m||_1$.
For all three datasets, the two algorithms consume merely 10 and 20 trigger-free images (correctly predicted by the classifier to be inspected) per class, respectively.
More details about these two algorithms can be found in Apdx. \ref{subsec:exp_main_supp_review_re}.
Again, our UMD is not limited to any particular algorithms for trigger reverse-engineering, allowing the potential incorporation with more recent or even future techniques \cite{wang2023unicorn}.
For the selection of candidate backdoor class pairs, we repeat lines 6-13 of Alg. \ref{alg:graph_opt} five times, each with a different initialization, and pick the best optimal solution to avoid poor local optimum.
For the anomaly detection step, we use the same confidence threshold of 95\% (i.e. $\beta=0.05$) as the other detectors for a fair comparison.
Results for other confidence levels are shown in Apdx. \ref{subsec:exp_main_supp_confidence}.

\begin{table}[t!]
\vspace{-0.1in}
\setlength{\tabcolsep}{4pt}
\caption{MIA of UMD for various X2X attacks and benign classifiers on CIFAR-10, GTSRB, and Imagenette, compared with five SOTA detectors.
MIAs of ABS, MNTD, and K-Arm on benign classifiers are manually fixed to control the false positive rates; thus are ``not applicable'' (n.a.).
UMD outperforms the five SOTA detectors (some even with supervision) on all three datasets by a clear margin in the average MIA over the X2X attacks.}
\centering{
\scalebox{0.83}{%
\begin{tabular}{c|c|cccccc}
    \multicolumn{8} {c} {(a) CIFAR-10}\\
\toprule \hline
    Setting & Benign & A2O & 2to2 & 5to5 & 8to8 & A2Ar & Avg \\
    \hline
    NC &0.60 &0.55 &0.20 &0.20 &0.30 & 0.30&0.31 \\
    \hline
    ABS & n.a. &0.90 &0.40 &0.15 &0.20 &0.20 &0.37 \\
    \hline
    PT-RED &0.70 &0.55 &0.40 &0.35 &0.30 &0.45 &0.41 \\
    \hline
    MNTD & n.a. &0.45 &0.65 &0.40 &0.25 & 0&0.35 \\
    \hline
    K-Arm & n.a. &1.0 &0.90 &0.70 & 0.65&0.45 &0.74 \\
    \hline
    \textbf{UMD} & 0.90 & 0.90 & 0.90 & 0.95 & 0.85 & 0.95 & \textbf{0.91} \\
    \hline \bottomrule
    \multicolumn{8} {c} {(b) GTSRB}\\
\toprule \hline
    Setting & Benign & A2O & 20to20 & 30to30 & 40to40 & A2Ar & Avg \\
    \hline
    NC &0.90 &0.85 &0.30 &0.25 &0.35 &0.35 &0.42 \\
    \hline
    ABS & n.a. &0.35 &0.25 &0.10 &0.20 &0.10 &0.20 \\
    \hline
    PT-RED &0.20 &0.65 &0.50 &0.30 &0.55 & 0.55&0.51 \\
    \hline
    MNTD & n.a. &0.25 &0.15 &0.15 &0.15 &0 &0.14 \\
    \hline
    K-Arm & n.a. &1.0 &0.95 &0.85 &0.80 &0.75 &0.87 \\
    \hline
    \textbf{UMD} & 0.90 & 0.95 & 0.80 & 0.90 & 0.90 & 1.0 & \textbf{0.91} \\
    \hline \bottomrule
    \multicolumn{8} {c} {(c) ImageNette}\\
\toprule \hline
    Setting & Benign & A2O & 3to3 & 5to5 & 8to8 & A2Ar & Avg \\
    \hline
    NC &0.90 &0.85 &0.30 &0.15 &0.05 &0.15 &0.30 \\
    \hline
    ABS & n.a. &1.0 &0.80 &0.40 &0.70 &0.70 &0.72 \\
    \hline
    PT-RED &0.80 &0.60 &0.45 &0.20 &0.10 &0 &0.27 \\
    \hline
    MNTD & n.a. &0.55 &0.50 &0.50 &0.30 &0.40 &0.45 \\
    \hline
    K-Arm & n.a. &0.90 & 0.60 &0.65 & 0.90 &0.80 &0.77 \\
    \hline
    \textbf{UMD} & 0.80 & 0.90 & 0.75 & 0.80 & 0.80 & 1.0 & \textbf{0.85} \\
    \hline \bottomrule
\end{tabular}
}
}
\label{tab:detection}
\vspace{-0.175in}
\end{table}

\vspace{-0.025in}
\subsection{Detection Performance}\label{subsubsec:exp_main_results}
\vspace{-0.025in}

As shown in Tab. \ref{tab:detection}, UMD clearly outperforms the five SOTA baselines on all three datasets in terms of the average MIA over the X2X attacks on each dataset.
In particular, most of these SOTA baselines exhibit some detection capability against A2O attacks they are designed for but fail against X2X attacks with more than one target class.
In contrast, UMD performs uniformly well against all X2X attacks, with even better control of the false positive rate (reflected by the generally higher MIA on benign classifiers) compared with the other two unsupervised detectors, NC and PT-RED.
We note that among the five SOTA baselines, K-Arm achieves the best average MIA against X2X attacks for all three datasets.
A possible reason is that K-Arm can effectively reverse-engineer the trigger for O2O attacks, while all X2X attacks can be viewed as a joint deployment of multiple O2O attacks sharing the same trigger.
However, K-Arm requires supervision to determine if a reverse-engineered trigger is associated with the backdoor, which is infeasible for practical backdoor detection problems.
But even with the supervision to maximize its performance, K-Arm is still outperformed by our \textit{unsupervised} UMD by 17\%, 4\%, and 8\% on CIFAR-10, GTSRB, and Imagenette, respectively, in terms of the average MIA over the X2X attacks for each dataset.
Finally, we show the pair inference performance of UMD in Tab. \ref{tab:detection_pair} since the other methods are not designed with such functionality.
UMD achieves high average PDRs for most X2X settings on the three datasets.
The relatively low PDRs, e.g. for A2Ar attacks on Imagenette, are likely due to the existence of intrinsic backdoor class pairs.

\begin{table}[t!]
\vspace{-0.15in}
    \setlength{\tabcolsep}{5pt}
    \caption{Average PDR of UMD over successfully detected attacks for the three datasets.}
    \centering{
        \scalebox{0.83}{%
            \begin{tabular}{c|c|ccccc}
                \toprule \hline
                \multirow{2}{*}{CIFAR-10} & Setting & A2O & 2to2 & 5to5 & 8to8 & A2Ar\\
                & Avg PDR & 0.93 &0.92 &1.0 &0.88 & 0.98 \\ \hline
                \multirow{2}{*}{GTSRB} & Setting & A2O & 20to20 & 30to30 & 40to40 & A2Ar\\
                & Avg PDR & 0.90 &0.72  &0.83 &0.79 & 0.86 \\ \hline
                \multirow{2}{*}{Imagenette} & Setting & A2O & 3to3 & 5to5 & 8to8 & A2Ar\\
                & Avg PDR & 0.96 &0.87 &0.75 &0.70 &0.65 \\ 
                \hline
                \bottomrule
            \end{tabular}
        }
    }
\label{tab:detection_pair}
\vspace{-0.2in}
\end{table}

\vspace{-0.025in}
\subsection{Ablation Study}\label{subsec:exp_ablation}
\vspace{-0.025in}

First, we show the advantages of using the proposed TR statistic and the associated clustering approach for backdoor detection by comparing UMD with its two baseline variants.
The first variant $\text{UMD}^{\dag}$ directly applies a MAD-based anomaly detector to triggers reverse-engineered for all class pairs, {\it without} using the TR statistic.
The second variant $\text{UMD}^{\ddag}$ uses TR simply as a secondary statistic {\it without} our clustering technique.
More details about these two baseline variants are shown in Apdx. \ref{subsec:exp_ablation_variants}.
For a demonstration, we consider the 2to2, 5to5, 8to8, and A2Ar attacks on CIFAR-10 with the perturbation trigger (i.e. 10 backdoored classifiers per setting).
We also use the 10 benign classifiers on CIFAR-10 to evaluate the false detection rate.

As shown in Tab. \ref{tab:detection_ablation}, though the desired false positive rate is set to 5\%, the actual ones for the two baseline variants are very high (reflected by the low MIAs on the benign classifiers).
Such high false positive rates cannot be alleviated even with alternative confidence levels, as shown in Apdx. \ref{subsec:exp_ablation_variants_auc}.
In contrast, UMD achieves a 93\% overall MIA as averaged over both attacked and benign classifiers with equal weights, showing a strong detection capability against X2X attacks with a controlled false detection rate.
Moreover, UMD achieves good performance in class pair inference, which is generally better than the two baseline variants.

\begin{table}[t!]
\vspace{-0.15in}
\setlength{\tabcolsep}{1.5pt}
\caption{MIA and average PDR of UMD, compared with the two baseline variants of UMD, against 2to2, 5to5, 8to8, and A2Ar attacks (with the perturbation trigger) on CIFAR-10.
Both variants of UMD favor predicting an ``attack'', resulting in low MIAs on benign classifiers (i.e. high false positive rates).
UMD achieves the best overall MIA (computed by adding the benign MIA with the average MIA for all attacks and then dividing by two).}
\centering{
    \scalebox{0.83}{%
        \begin{tabular}{c|cccccccccc}
            \toprule \hline
            \multirow{2}*{} & \multicolumn{2}{c}{2to2} & \multicolumn{2}{c}{5to5} & \multicolumn{2}{c}{8to8} & \multicolumn{2}{c}{A2Ar} & Benign & \multirow{2}*{\shortstack{Overall\\MIA}}\\ 
            \cline{2-10}
            & MIA & PDR & MIA & PDR & MIA & PDR & MIA & PDR & MIA \\ \hline
            $\text{UMD}^{\dag}$ & 1.0 & 0.90 & 1.0 & 0.94 &1.0 & 0.84 &1.0 & 0.72 & \cellcolor{red!25}0 & 0.50 \\ \hline
                $\text{UMD}^{\ddag}$ & 1.0 & 0.45 & 1.0 & 0.82 & 1.0 & 0.73 & 1.0 & 0.53 & \cellcolor{red!10}0.40 & 0.70 \\ \hline
            \textbf{UMD} & 1.0 & 0.85 & 1.0 & 1.0 & 0.90 & 0.83 & 0.90 & 0.92 & 0.90 & {\bf 0.93}\\
            \hline\bottomrule
        \end{tabular}
    }
}
\label{tab:detection_ablation}
\vspace{-0.2in}
\end{table}

Next, we show the influence of the hyperparameters on UMD.
Since UMD does not involve any tunable hyperparameters in the inference step, we study the influence of the hyperparameters used by the trigger reverse-engineering algorithms on our UMD.
In particular, we focus on the number of images and the targeted misclassification fraction used by \citeauthor{Post-TNNLS} (2020) for trigger reverse-engineering.
Note that for X2X attacks, the ASR for a backdoor class pair is typically less than 100\%.
Thus, in principle, the defender should avoid using an overly large targeted misclassification fraction; otherwise, trigger reverse-engineering may fail to produce an accurate estimation of the actual backdoor trigger.
As shown in Fig. \ref{fig:ablation_ni_pi}, UMD performs uniformly well for targeted misclassification fractions less than 1, giving a large freedom to choose this hyperparameter.

As for the number of images, UMD prefers even fewer (though $>1$) images for trigger reverse-engineering than the default setting by \citeauthor{Post-TNNLS} (2020).
Note that triggers reverse-engineered on a large number of images may easily contain class-discriminate features that transfer well between non-backdoor class pairs (especially those sharing the same target class) and lead to a wrong detection.
In practice, the suitable number of images for trigger reverse-engineering can be easily determined as the following.
Ideally, a TR map (e.g. the one in Fig. \ref{fig:outline}) is supposed to be dark almost everywhere except for a few entries that may be associated with the backdoor class pairs.
Thus, we start with a relatively large number of images (e.g. 15 or even more) to compute the TR statistics.
If there are more than $2(K^2-K)$ bright entries in the TR map with TR larger than some prescribed threshold, we reduce the number of images, e.g., by dividing it by 2.
Here, $K$ is the number of classes, and $K^2-K$ is the maximum number of entries in the TR map corresponding to a valid candidate set of backdoor class pairs.
The above steps are repeated until there are at most $2(K^2-K)$ bright entries in the TR map.

\begin{figure}[t!]
\vspace{-0.065in}
\includegraphics[width=.46\linewidth]{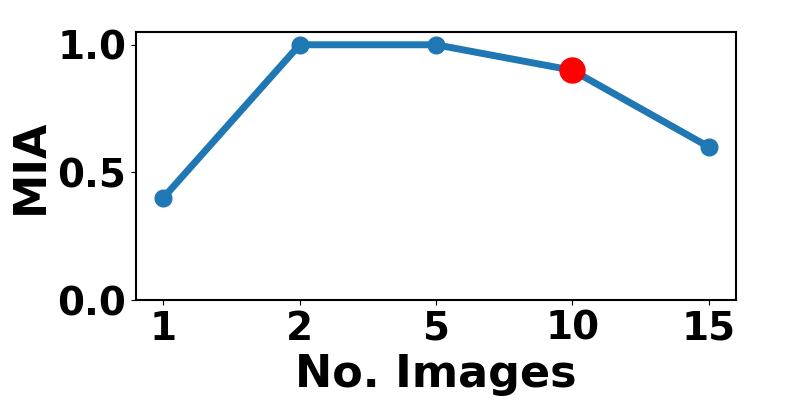}
\includegraphics[width=.46\linewidth]{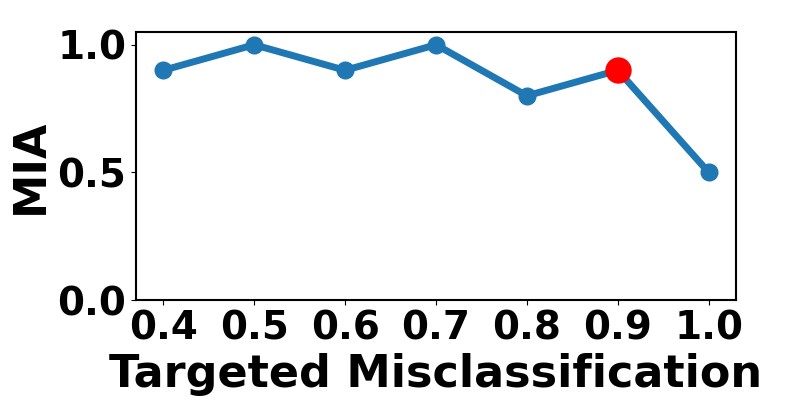}
\vspace{-0.095in}
\caption{Influence of the number of images and the targeted misclassification fraction used by \citeauthor{Post-TNNLS} (2020) for trigger reverse-engineering on our UMD. The default setting suggested by the authors, which is globally used in this work, is marked in red. UMD prefers even fewer images for trigger reverse-engineering and is insensitive to targeted misclassification fractions less than 1.}
\label{fig:ablation_ni_pi}
\vspace{-0.165in}
\end{figure}

\begin{table}[t!]
	\setlength{\tabcolsep}{4pt}
        \caption{MIA of UMD against a variety of X2X attacks with the WaNet trigger and the Blended trigger, respectively. UMD achieves generally high MIAs against all these X2X attacks for both triggers.}
	\centering{
		\scalebox{0.83}{%
			\begin{tabular}{c|cccc}
				\toprule \hline
				& 2to2 & 5to5 & 8to8 & A2Ar\\ \hline
				WaNet & 0.80 & 0.80 & 1.0 & 0.90\\ \hline
                    Blended & 0.90 & 0.90 & 0.90 & 0.90\\
            \hline\bottomrule
			\end{tabular}
		}
	}
	\label{tab:detection_advanced_trigger}
 \vspace{-0.195in}
\end{table}

\begin{table}[t!]
	\setlength{\tabcolsep}{4pt}
        \caption{UMD achieves generally high MIAs against four adaptive attacks, ISSBA, CLA, N2N, and O2O, on CIFAR-10.}
	\centering{
		\scalebox{0.83}{%
			\begin{tabular}{c|cccc}
				\toprule \hline
				& ISSBA & CLA & N2N & O2O\\ \hline
				MIA for UMD & 1.0  & 0.60 & 1.0 & 0.80\\
            \hline\bottomrule
			\end{tabular}
		}
	}
	\label{tab:detection_advanced}
 \vspace{-0.165in}
\end{table}

\vspace{-0.03in}
\subsection{Performance of UMD against Adaptive Attacks}\label{subsec:exp_adaptive}
\vspace{-0.03in}

Here, we show the detection performance of UMD against two advanced trigger types, WaNet \cite{nguyen2021wanet} and Blended \cite{Targeted}, for a variety of X2X attack settings.
We also evaluate UMD against four adaptive attacks, including the invisible sample-specific backdoor attack (ISSBA) proposed by \citeauthor{li_ISSBA_2021} (2021), the clean label attack (CLA) proposed by \citeauthor{Clean_Label_BA} (2019), the N2N attack proposed by \citeauthor{xue2022imperceptible} (2022), and the O2O attack with one randomly selected backdoor class pair.
We consider the default A2O setting for ISSBA and CLA since these two attacks cannot be easily extended to other X2X settings.
For each N2N attack, we launch $N=3$ A2O attacks together, each with a randomly selected target class and a random patch trigger.
The experiments in this section are conducted on CIFAR-10.
For each setting considered for each trigger or attack type, we create 10 attacks and train a model for each attack using the configurations in Sec. \ref{subsec:exp_main_setup}.

Due to the complexity of the trigger embedding functions for WaNet and Blended, we employ a more general trigger reverse-engineering algorithm proposed by \citeauthor{Post-TNNLS} (2020), which estimates a {\it common} additive perturbation in the internal layer of the classifier (see Apdx. \ref{subsubsec:exp_main_supp_review_re_ss} for more details).
For the N2N attack, we introduce a trivial generalization of  UMD by sequentially selecting multiple clusters (by repeating lines 5-13 of Alg. \ref{alg:graph_opt} multiple times).
Each cluster is then inferred by the same anomaly detection procedure in Sec. \ref{subsubsec:detection_inference}, where the ``null'' statistics are those not belonging to any clusters.
Intuitively, these clusters will either be associated with one of the $N$ triggers or be the non-backdoor class pairs and rejected by anomaly detection.

In Tab. \ref{tab:detection_advanced_trigger}, we show the effectiveness of UMD against the WaNet trigger and the Blended trigger for a variety of X2X attacks.
In Tab. \ref{tab:detection_advanced}, we show that UMD can also detect the four adaptive attacks with generally high MIA.
Notably, although UMD always selects at least two putative backdoor class pairs for inference, it still detects the O2O attack (with only one backdoor class pair) well, thanks to the (almost inevitable) collateral damage which introduces additional ``backdoor class pairs'' (see Apdx. \ref{subsec:others_o2o_collateral} for more details).
Moreover, for the 10 N2N attacks, the generalized UMD that selects 5 clusters of candidate backdoor class pairs correctly identifies 28 out of the 10x3 triggers, with only 2 clusters falsely recognized as associated with the backdoor.

\begin{table}[t!]
\vspace{-0.175in}
\setlength{\tabcolsep}{4pt}
\caption{Using the class pairs detected by UMD to mitigate the 2to2, 5to5, 8to8, and A2Ar attacks on CIFAR-10, based on the method by \citeauthor{NC} (2019). All the backdoored classifiers are ``fixed'' as reflected by the low average ASR (\%) with negligible degradation in the average ACC (\%).}
\centering{
    \scalebox{0.82}{%
        \begin{tabular}{c|cccc}
            \toprule \hline
            & 2to2 & 5to5 & 8to8 & A2Ar \\ \hline
            ASR (Avg) & 98.1$\rightarrow$1.4 & 93.3$\rightarrow$1.4 & 91.2$\rightarrow$7.2 & 89.9$\rightarrow$11.2 \\ \hline
            ACC (Avg) & 92.4$\rightarrow$92.2 & 92.7$\rightarrow$92.3 & 92.8$\rightarrow$92.3 & 93.7$\rightarrow$91.9\\
            \hline\bottomrule
        \end{tabular}
    }
}
\label{tab:mitigation}
\vspace{-0.21in}
\end{table}

\vspace{-0.065in}
\subsection{Backdoor Mitigation}\label{subsec:exp_mitigation}
\vspace{-0.045in}

The backdoor class pairs detected by UMD can be used to ``fix'' the backdoored model.
This process is called \textit{backdoor mitigation} or \textit{Trojan removal} \cite{TRC}.
Here, we use the method proposed by \citeauthor{NC} (2019) to mitigate the 2to2, 5to5, 8to8, and A2Ar attacks on CIFAR-10 that are detected by UMD.
For each class pair being detected, we embed the reverse-engineered trigger into clean samples from the source class but without changing their labels.
By fine-tuning using these samples, together with some clean samples without the trigger (to maintain the ACC), the model will learn to predict correctly even if a test sample is embedded with the trigger, i.e. the backdoor will be ``unlearned''.
This is shown in Tab. \ref{tab:mitigation}, where for all attack settings, the average ASR drops to $\leq11.2\%$ with negligible degradation in the average ACC -- the models are fixed.

\vspace{-0.095in}
\section{Conclusion}\label{sec:conclusion}
\vspace{-0.045in}

We proposed UMD, the first unsupervised backdoor model detector against X2X attacks. We defined TR and proved its intrinsic property in distinguishing backdoor class pairs from non-backdoor class pairs. Our UMD first selects a set of putative backdoor class pairs based on the TR statistics by solving a clustering problem we proposed, and then uses a robust, unsupervised anomaly detector to infer both the presence of the attack and the backdoor class pairs. Empirically, we show that UMD performs well on three datasets against X2X attacks with diverse settings.

\noindent\textbf{Acknowledgements} This work is partially supported by the NSF grant No.1910100, No. 2046726, Defense Advanced Research Projects Agency (DARPA) No. HR00112320012, C3.ai, and Amazon Research Award.

\newpage

\nocite{langley00}

\bibliography{ref}
\bibliographystyle{icml2023}

\newpage

\appendix
\onecolumn

\section{Ethics Statement}\label{sec:ethics_statement}

The main purpose of this research is to understand the behavior of deep learning systems facing malicious activities and enhance their safety without degrading their utility.
The X2X backdoor attack considered in this paper is the union of many well-known backdoor attacks with different settings -- all these attacks are open-sourced.
Thus, our work will be beneficial to the community in defending against these attacks via detection.
However, we do not claim that our detector is effective against all backdoor attacks that may appear in the future.
In fact, there is no published backdoor detector making such a claim, just like that there is no published backdoor attack proved to be evasive against all future detectors.
The code related to this work can be found at: \url{https://github.com/polaris-73/MT-Detection}
Finally, the paper is written by humans without the involvement of large language models.

\section{Analysis of TR and Proofs}\label{sec:analysis}

Here, we present the complete analysis showing that the TR statistic is intrinsically suitable for detecting backdoor class pairs. Such an intrinsic property of TR is not possessed by many popular statistics for backdoor model detection. For example, the (patch) size of the reverse-engineered triggers used by \citeauthor{NC} (2019) is based on the premise that the actual trigger used by the attacker is small.

Our main theoretical results are summarized in Thm. \ref{thm:main} in Sec. \ref{subsec:mt} (also restated as Thm. \ref{thm:main_restate} below). Intuitively, the theorem says that the trigger reverse-engineered for a backdoor class pair will likely induce a small classification loss to all the other backdoor class pairs. Thus, empirically, we will likely observe a large TR statistic (possibly close to 1) from one backdoor class pair to another. In the following, we first present the complete problem settings that will facilitate our analysis. Then we prove Thm. \ref{thm:main}.

\subsection{Complete Settings}\label{subsec:analysis_settings}

{\bf Set of class pairs:} We consider an {\it arbitrary} set of class pairs ${\mathcal A}'=\{a_1, \cdots, a_k\}$ ($k\leq|{\mathcal Y}|$) satisfying:
\begin{itemize}
\item For $\forall a=(s, t)\in{\mathcal A}'$, $s\neq t$ (i.e. condition (1) in Def. \ref{def:target_specific_attack});
\item If $|{\mathcal A}'|>1$, for any $a_i=(s_i, t_i)\in{\mathcal A}'$ and $a_j=(s_j, t_j)\in{\mathcal A}'$, $s_i\neq s_j$ if $a_i\neq a_j$ (i.e. condition (2) in Def. \ref{def:target_specific_attack});
\item $P_A(a)>0$ for $\forall a\in{\mathcal A}'$ (i.e. positive probability for all class pairs in ${\mathcal A}'$).
\end{itemize}
Note that here, we do not specify if any class pair $a\in{\mathcal A}'$ is a backdoor class pair or not.

{\bf Random variables:} Following the main paper, we use $X\in{\mathcal X}$ and $Y\in{\mathcal Y}$ to denote the random variables for samples and labels respectively. $A\in{\mathcal A}'$ denotes the random variable for class pairs in ${\mathcal A}'$. Moreover, for any trigger embedding function $\delta$, we use $X^{\delta}\triangleq\delta(X)$ to denote the random variable for samples generated from $X$ by embedding a trigger using $\delta$. Then, each $\delta$ specifies a conditional distribution $P_{X^{\delta}|X}$. In summary of the above, we have the following dependency:
\begin{equation}\label{eq:dependency}
	(A, X, Y, X^{\delta})\sim P_A \cdot P_{XY|A} \cdot P_{X^{\delta}|X}
\end{equation}

{\bf Set of estimators/classifiers:} Considering that TR is defined in terms of the (expected) classification loss on samples with a (reverse-engineered) trigger embedded (see Eq. \eqref{eq:mt_definition}), we use ${\mathcal F}$ to represent the set of estimators (i.e. classifiers in our problem) for estimating $Y$ from the trigger-embedded sample $X^{\delta}$ with arbitrary $\delta$. For example, ${\mathcal F}$ may contain all classifiers with the same architecture as the one to be inspected (i.e. the classifier that will also be used for trigger reverse-engineering) but with different parameter values. For convenience, we also define $\Delta$ as the set of all trigger embedding functions. For example, for image perturbation triggers, $\Delta$ may include perturbations with different shapes and sizes. For another example, for sample-specific triggers, $\Delta$ may be the set of all autoencoders with the same architecture but different parameter values. Moreover, we define a set ${\mathcal G}\triangleq\Delta\times{\mathcal F}$ of ``end-to-end'' functions, such that each $g\in{\mathcal G}$ can be represented by $g=f\circ\delta$ for some $\delta\in\Delta$ and $f\in{\mathcal F}$. These sets of estimators and their relation to the random variables we have defined previously are illustrated in Fig. \ref{fig:estimators}.

\begin{figure}[h]
	\centering
	\includegraphics[width=.24\columnwidth]{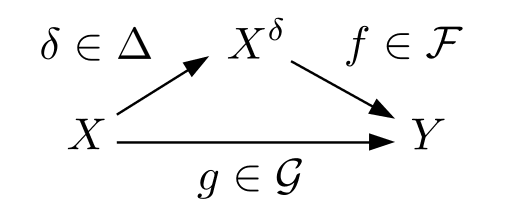}
	\caption{Illustration of the estimators/classifiers and their relation to the random variables.}\label{fig:estimators}
\end{figure}
 
{\bf Bayes classifiers:} Bayes classifier refers to the classifier with the minimum classification loss when predicting/estimating the label of a random input sample \cite{bayes}. Typically, the Bayes classifier (usually with respect to a space of classifiers) is specified by the joint distribution of the input and the label. For example, in the main paper, given a trigger embedding function $\delta$, we denote the (set of) Bayes classifier(s) for estimating $Y$ from $X^{\delta}$ (with joint distribution $P_{X^{\delta}Y}$) as ${\mathcal F}^{\delta}\subset{\mathcal F}$ (see Eq. \eqref{eq:bayes_classifier}). And we denote the associated Bayes risk as $R^{\mathcal F}(Y|X^{\delta})$ (see Eq. \eqref{eq:bayes_risk}). Here, among all ``end-to-end'' classifiers in ${\mathcal G}$ for estimating $Y$ from $X$ (by first embedding a trigger and then classifying), where $X, Y\sim P_{XY}$, we denote the set of Bayes classifiers (i.e. with the smallest classification loss) as:
\begin{equation}\label{eq:bayes_classifier_g}
	{\mathcal G}^{\ast} = \{g\in{\mathcal G} | \mathbb{E}_{P_{XY}} [l(Y, g(X))]=R^{\mathcal G}(Y|X)\}
\end{equation}
where
\begin{equation}\label{eq:bayes_risk_g}
	R^{\mathcal G}(Y|X) = \min_{g\in{\mathcal G}} \mathbb{E}_{P_{XY}} [l(Y, g(X))]
\end{equation}
denotes the associated Bayes risk. Similarly, for each class pair $a\in{\mathcal A}'$ with conditional joint distribution $P_{XY|a}$ for sample $X$ and label $Y$, we denote the set of Bayes classifiers, with respect to the set ${\mathcal G}$, for estimating $Y$ from $X$ given $a$ as:
\begin{equation}\label{eq:bayes_classifier_g_conditional}
	{\mathcal G}_a^{\ast} = \{g\in{\mathcal G} | \mathbb{E}_{P_{XY|a}} [l(Y, g(X))]=R^{\mathcal G}_a(Y|X)\}
\end{equation}
where
\begin{equation}\label{eq:bayes_risk_g_conditional}
	R^{\mathcal G}_a(Y|X) = \min_{g\in{\mathcal G}} \mathbb{E}_{P_{XY|a}} [l(Y, g(X))]
\end{equation}
is the associated Bayes risk with conditioning on $a$. Finally, for each class pair $a\in{\mathcal A}'$ and any $\delta\in\Delta$, the set of Bayes classifiers, with respect to ${\mathcal F}$, for estimating $Y$ from $X^{\delta}$ (both conditioned on $a$) can be written as:
\begin{equation}\label{eq:bayes_classifier_conditional}
	{\mathcal F}_a^{\delta} = \{f\in{\mathcal F} | \mathbb{E}_{P_{X^{\delta}Y|a}} [l(Y, f(X^{\delta}))]=R_a^{\mathcal F}(Y|X^{\delta})\}.
\end{equation}
where 
\begin{equation}\label{eq:bayes_risk_conditional}
	R_a^{\mathcal F}(Y|X^{\delta}) = \min_{f\in{\mathcal F}} \mathbb{E}_{P_{X^{\delta}Y|a}} [l(Y, f(X^{\delta}))]
\end{equation}
is the associated Bayes risk conditioned on $a$.

\subsection{Proof of Thm. \ref{thm:main}}

To begin with, we show a mild assumption required by the theorem:
\begin{assumption}\label{asm:commonality}
	{\it $\exists g\in{\mathcal G}^{\ast}$ satisfying $g\in{\mathcal G}_a^{\ast}$ for $\forall a\in{\mathcal A}'$}.
\end{assumption}

{\bf Remarks:} The assumption basically says that there exists a Bayes classifier $g$ for estimating $Y$ from $X$ (unconditionally) that is also Bayes when $X$ and $Y$ are both conditioned on some arbitrary class pair $a\in{\mathcal A}'$. For convenience, we define ${\mathcal B}=\{(s, t)\in{\mathcal Y}\times{\mathcal Y}|s=t\}$ as the set of all ``identical'' pairs. Then, the assumption is {\it guaranteed to hold} if the samples together with their (correct) labels following the joint distribution $P_{XY|{\mathcal B}}$ are {\it perfectly separable} by some classifier $f\in{\mathcal F}$. To see this, let's first consider the case where ${\mathcal A}'={\mathcal B}$. We can easily construct the desired function $g=f\circ \delta$ from $f$, with $\delta$ being an identity mapping. Then, given that ${\mathbb E}_{P_{XY|{\mathcal B}}}[l(Y, f(X))]=0$ for $l$ being the 0-1 loss (which is due to that $X, Y\sim P_{XY|{\mathcal B}}$ is perfectly separable by $f$), ${\mathbb E}_{P_{XY|{\mathcal B}}}[l(Y, g(X))]=0$ will also hold since $\delta(X)=X$ by our construction. Since the loss is defined to be non-negative, we will then have ${\mathbb E}_{P_{XY|a}}[l(Y, g(X))]=0$ for $\forall a\in{\mathcal A}'={\mathcal B}$. Next, we consider the case where ${\mathcal A}'\neq{\mathcal B}$. We first construct an injective mapping $\phi:{\mathcal A}'\rightarrow{\mathcal B}$, such that for any $a=(s,t)\in{\mathcal A}'$ and $a'=(s',t')=\phi(a)\in{\mathcal B}$ (with $s'=t'$ by the definition of ${\mathcal B}$), $a'=\phi(a)$ {\it if and only if} $s=s'$. The existence of such $\phi$ is guaranteed by that: (a) both ${\mathcal A}'$ and ${\mathcal B}$ satisfy condition (1) in Def. \ref{def:target_specific_attack} (see the definition of ${\mathcal A}'$ in Sec. \ref{subsec:analysis_settings}), and (b) $|{\mathcal A}'|\leq|{\mathcal B}|=|{\mathcal Y}|$ (which allows each element in ${\mathcal A}'$ to have an image in ${\mathcal B}$). Thus, we can easily rearrange the output neurons of $f$ based on the mapping $\phi$. In particular, for any $a=(s,t)\in{\mathcal A}'$ and its associated $a'=(s',t')=\phi(a)$, we relabel class $t'$ (where $t'=s'=s$) to class $t$. If two different class pairs $a_i=(s_i,t)\in{\mathcal A}'$ and $a_j=(s_j,t)\in{\mathcal A}'$ share the same target class $t$, the rearranged classifier will predict to class $t$ if $f$ predicts to any of $s_i$ and $s_j$. Then, we will also obtain a desired classifier $g$ satisfying Assumption \ref{asm:commonality} by affiliating an identity trigger embedding function $\delta$ to the classifier rearranged from $f$ following the procedure above.

In the proof of Thm. \ref{thm:main}, we will also need the following lemmas.

\begin{lemma}\label{lem:gdpi}
	({\bf Generalized Data Processing Inequality} \cite{GDPI}) Suppose random variables $X^{\delta}$ and $Y$ are conditionally independent given $X$. Then, for any loss function $l$, we have:
	\begin{equation*}
		R^{\mathcal F}(Y|X^{\delta}) \geq R^{\mathcal G}(Y|X).
	\end{equation*}
\end{lemma}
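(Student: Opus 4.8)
The plan is to prove the bound by a restriction argument rather than by reproducing the general machinery of \cite{GDPI}. For an arbitrary classifier $f\in{\mathcal F}$ I will construct an ``end-to-end'' classifier $g\in{\mathcal G}$ estimating $Y$ from $X$ whose expected loss under $P_{XY}$ equals the expected loss of $f$ estimating $Y$ from $X^{\delta}$ under $P_{X^{\delta}Y}$. The obvious candidate is $g\triangleq f\circ\delta$; because the fixed $\delta$ lies in $\Delta$ and ${\mathcal G}=\Delta\times{\mathcal F}$, we indeed have $g\in{\mathcal G}$. Intuitively, the composition map $f\mapsto f\circ\delta$ sends ${\mathcal F}$ into ${\mathcal G}$ while preserving the relevant risk value, so the set of risks attainable on the left is a subset of the set attainable on the right, forcing the infimum on the left to be at least the infimum on the right.

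The key step is to rewrite the loss using the dependency structure \eqref{eq:dependency}, in particular the hypothesis $X^{\delta}\perp Y\mid X$. By the tower rule,
\begin{equation*}
  \mathbb{E}_{P_{X^{\delta}Y}}[\,l(Y,f(X^{\delta}))\,]
  = \mathbb{E}_{(X,Y)\sim P_{XY}}\big[\,\mathbb{E}_{X^{\delta}\sim P_{X^{\delta}|X}}[\,l(Y,f(X^{\delta}))\,]\,\big],
\end{equation*}
where the conditional independence lets the inner expectation be taken over $P_{X^{\delta}|X}$ with $Y$ held fixed. In the present setting the trigger is deterministic, $X^{\delta}=\delta(X)$, so the inner expectation collapses to $l(Y,f(\delta(X)))=l(Y,g(X))$ with $g=f\circ\delta$, yielding $\mathbb{E}_{P_{X^{\delta}Y}}[l(Y,f(X^{\delta}))]=\mathbb{E}_{P_{XY}}[l(Y,g(X))]$. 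For a genuinely stochastic kernel $P_{X^{\delta}|X}$ the same identity holds, with $g$ understood as the randomized classifier $f\circ\delta$, which remains in ${\mathcal G}$ since $\Delta$ may contain stochastic embeddings.

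To finish, since $g\in{\mathcal G}$ we have $\mathbb{E}_{P_{XY}}[l(Y,g(X))]\ge\min_{g'\in{\mathcal G}}\mathbb{E}_{P_{XY}}[l(Y,g'(X))]=R^{\mathcal G}(Y|X)$, and as $f$ was arbitrary we may take the minimum over $f\in{\mathcal F}$ on the left-hand side to obtain $R^{\mathcal F}(Y|X^{\delta})=\min_{f\in{\mathcal F}}\mathbb{E}_{P_{X^{\delta}Y}}[l(Y,f(X^{\delta}))]\ge R^{\mathcal G}(Y|X)$, which is the claimed inequality.

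I do not anticipate a deep obstacle here; the argument is essentially one composition plus one tower-rule identity. The points that warrant care are: (i) verifying $\delta\in\Delta$ so that $f\circ\delta\in{\mathcal G}$, which is immediate from ${\mathcal G}=\Delta\times{\mathcal F}$; (ii) the measure-theoretic justification of the tower-rule step and, in the stochastic-trigger case, treating ${\mathcal G}$ as closed under the induced randomization; and (iii) existence of the minima, which the paper already assumes. Alternatively, one can obtain the statement by directly invoking \cite{GDPI} after checking that the Markov structure $Y\!\to\!X\!\to\!X^{\delta}$ and the function classes ${\mathcal F},{\mathcal G}$ satisfy its hypotheses.
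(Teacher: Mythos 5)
Your proof is correct. Note, however, that the paper does not actually prove this lemma: it is stated with a citation to \cite{GDPI} and imported as a known result, with no argument given in the appendix (the proof environment that follows it belongs to Lemma \ref{lem:gdpi_equality}). Your restriction/composition argument --- map each $f\in{\mathcal F}$ to $g=f\circ\delta\in{\mathcal G}$, use the conditional independence (automatic when $\delta$ is deterministic) to identify $\mathbb{E}_{P_{X^{\delta}Y}}[l(Y,f(X^{\delta}))]$ with $\mathbb{E}_{P_{XY}}[l(Y,g(X))]$, and conclude that the left-hand minimum ranges over a subset of the risks attainable on the right --- is the natural self-contained proof here, and it is arguably better suited to the paper's setting than a bare appeal to \cite{GDPI}: the generic data processing inequality is usually stated for unrestricted (all measurable) estimator classes, whereas here ${\mathcal F}$ and ${\mathcal G}$ are restricted, and it is precisely the structural assumption ${\mathcal G}=\Delta\times{\mathcal F}$ (so that $f\circ\delta\in{\mathcal G}$) that makes the inequality go through; your proof makes this dependence explicit, where the citation leaves it implicit. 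Your caveats (ii) and (iii) are appropriately flagged and consistent with the paper's own simplifying assumption that the minima exist; the only mildly informal point is treating $f\circ\delta$ as an element of ${\mathcal G}$ when $\delta$ is a stochastic kernel, but since the paper defines $\Delta$ as the set of all trigger embedding functions and only forms $X^{\delta}$ for $\delta\in\Delta$, this is harmless.
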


\begin{lemma}\label{lem:gdpi_equality}
	There always exists $\delta$ such that
	\begin{equation*}
		R^{\mathcal F}(Y|X^{\delta}) = R^{\mathcal G}(Y|X).
	\end{equation*}
	Moreover, for each $a\in{\mathcal A}'$, there also exists $\delta$ such that
	\begin{equation*}
		R_a^{\mathcal F}(Y|X^{\delta}) = R_a^{\mathcal G}(Y|X).
	\end{equation*}
\end{lemma}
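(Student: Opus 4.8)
The plan is to prove Lemma~\ref{lem:gdpi_equality} constructively, exhibiting in each case an explicit $\delta$ (in fact a trivial one) that achieves equality in the Generalized Data Processing Inequality of Lemma~\ref{lem:gdpi}. First I would observe that the inequality $R^{\mathcal F}(Y|X^\delta)\geq R^{\mathcal G}(Y|X)$ always holds by Lemma~\ref{lem:gdpi}, so it suffices to produce a single $\delta$ for which the reverse inequality holds. The natural candidate is the \emph{identity} trigger embedding function $\delta_{\mathrm{id}}$ with $\delta_{\mathrm{id}}(X)=X$, which belongs to $\Delta$ since $\Delta$ is assumed to contain, e.g., the zero perturbation $\delta(x)=[x+0]_{\mathrm c}=x$ for perturbation triggers, or a mask $m=0$ for patch triggers, or the appropriate identity-realizing parameters for more general trigger families. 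With this choice $X^{\delta_{\mathrm{id}}}=X$ as random variables, so $P_{X^{\delta_{\mathrm{id}}}Y}=P_{XY}$ and ${\mathcal F}_a^{\delta_{\mathrm{id}}}$ is just the set of Bayes classifiers for $Y$ given $X$ over ${\mathcal F}$.

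Next I would chain the definitions: for $\delta=\delta_{\mathrm{id}}$,
\begin{equation*}
R^{\mathcal F}(Y|X^{\delta_{\mathrm{id}}}) = \min_{f\in{\mathcal F}} \mathbb{E}_{P_{XY}}[l(Y, f(X))] \geq \min_{g\in{\mathcal G}} \mathbb{E}_{P_{XY}}[l(Y, g(X))] = R^{\mathcal G}(Y|X),
\end{equation*}
where the middle inequality is because ${\mathcal F}$ embeds into ${\mathcal G}$ via $f\mapsto f\circ\delta_{\mathrm{id}}$, so the minimum over the larger set ${\mathcal G}$ is no larger. Combined with Lemma~\ref{lem:gdpi} this forces equality, proving the first claim. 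For the second claim I would repeat the identical argument with every expectation taken under the class-pair-conditional distribution $P_{XY|a}$ instead of $P_{XY}$: the same identity $\delta_{\mathrm{id}}$ gives $R_a^{\mathcal F}(Y|X^{\delta_{\mathrm{id}}})=\min_{f\in{\mathcal F}}\mathbb{E}_{P_{XY|a}}[l(Y,f(X))]\geq\min_{g\in{\mathcal G}}\mathbb{E}_{P_{XY|a}}[l(Y,g(X))]=R_a^{\mathcal G}(Y|X)$, and Lemma~\ref{lem:gdpi} (which applies verbatim with all distributions conditioned on $a$, since conditional independence of $X^\delta$ and $Y$ given $X$ is preserved under conditioning on $a$) supplies the matching lower bound. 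Since the construction of $\delta_{\mathrm{id}}$ does not depend on $a$, the \emph{same} $\delta$ works for all $a\in{\mathcal A}'$ simultaneously, which is a slightly stronger statement than what is literally asked.

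The only real subtlety — and the step I would flag as the main obstacle — is justifying that $\delta_{\mathrm{id}}\in\Delta$, i.e.\ that the class of trigger embedding functions under consideration actually contains (or can realize) the identity map. This is transparent for the additive-perturbation and patch-trigger families explicitly discussed after Def.~\ref{def:target_specific_attack} (take the zero perturbation or empty mask), and for autoencoder-style sample-specific triggers one invokes that an autoencoder with the right weights computes the identity; the paper's running examples all have this property, so I would simply state it as a (harmless, already implicit) structural assumption on $\Delta$ and note that the lemma fails only for pathological trigger families that cannot even leave an input untouched. Everything else is a one-line monotonicity-of-minimum argument plus an invocation of Lemma~\ref{lem:gdpi}, so no heavy calculation is needed; I would keep the write-up to a few lines per case.
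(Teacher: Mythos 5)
Your construction does not prove the lemma: the inequality you derive points the wrong way. As you yourself note at the outset, Lemma~\ref{lem:gdpi} already gives $R^{\mathcal F}(Y|X^{\delta})\geq R^{\mathcal G}(Y|X)$ for \emph{every} $\delta$, so what you must exhibit is a $\delta$ with the reverse inequality $R^{\mathcal F}(Y|X^{\delta})\leq R^{\mathcal G}(Y|X)$. But your chain for $\delta_{\mathrm{id}}$ establishes $R^{\mathcal F}(Y|X^{\delta_{\mathrm{id}}})\geq R^{\mathcal G}(Y|X)$ --- the same direction as Lemma~\ref{lem:gdpi}, of which it is just the special case $\delta=\delta_{\mathrm{id}}$ --- and two inequalities of the same sign do not force equality; the sentence ``combined with Lemma~\ref{lem:gdpi} this forces equality'' is a non sequitur. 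Nor is the equality you would need ($R^{\mathcal F}(Y|X)=R^{\mathcal G}(Y|X)$) true in general: since ${\mathcal G}=\Delta\times{\mathcal F}$ ranges over all compositions $f\circ\delta$, preprocessing by a nontrivial $\delta$ can strictly lower the best risk achievable by the restricted family ${\mathcal F}$ (take ${\mathcal F}$ a fixed architecture that classifies raw $X$ poorly while some $f\in{\mathcal F}$ classifies $\delta(X)$ perfectly for a suitable $\delta\in\Delta$); then $R^{\mathcal F}(Y|X^{\delta_{\mathrm{id}}})>R^{\mathcal G}(Y|X)$ and the identity trigger does not attain the bound. The subtlety you flagged --- whether $\delta_{\mathrm{id}}\in\Delta$ --- is not the real obstacle.

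The correct construction goes through a Bayes end-to-end classifier rather than the identity. Take any $g^{\ast}\in{\mathcal G}^{\ast}$ and choose $\delta$ so that some $f\in{\mathcal F}$ satisfies $f(X^{\delta})=g^{\ast}(X)$ (the paper writes this as $\delta=f^{-1}\circ g^{\ast}$ for an arbitrary $f\in{\mathcal F}$). Then
\begin{equation*}
R^{\mathcal F}(Y|X^{\delta})\;\leq\;\mathbb{E}_{P_{XY}\cdot P_{X^{\delta}|X}}\bigl[l(Y,f(X^{\delta}))\bigr]\;=\;\mathbb{E}_{P_{XY}}\bigl[l(Y,g^{\ast}(X))\bigr]\;=\;R^{\mathcal G}(Y|X),
\end{equation*}
and now Lemma~\ref{lem:gdpi} supplies the matching lower bound, giving equality. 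The conditional statement follows identically with $g^{\ast}$ drawn from ${\mathcal G}_a^{\ast}$; note that, contrary to your closing remark, the optimal $\delta$ may then genuinely depend on $a$.
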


\begin{proof}
	For the unconditional case, we construct $\delta=f^{-1}\circ g^{\ast}$ with arbitrary $f\in{\mathcal F}$ and arbitrary $g^{\ast}\in{\mathcal G}^{\ast}$, such that $X^{\delta}=\delta(X)=f^{-1}(g^{\ast}(X))$. Then, we have
	\begin{align*}
		R^{\mathcal G}(Y|X) &= \mathbb{E}_{P_{XY}} [l(Y, g^{\ast}(X))] \tag*{$\triangleright$ {Eq. \eqref{eq:bayes_classifier_g} and \eqref{eq:bayes_risk_g}}}\\
		&= \mathbb{E}_{P_{XY}\cdot P_{X^{\delta}|X}} [l(Y, f(X^{\delta}))] \tag*{$\triangleright$ Construction of $\delta$}\\
		&\geq R^{\mathcal F}(Y|X^{\delta}) \tag*{$\triangleright$ {Eq. \eqref{eq:bayes_classifier} and \eqref{eq:bayes_risk}}}
	\end{align*}
	According to Lemma \ref{lem:gdpi}, since $X^{\delta}$ and $Y$ are indeed conditionally independent given $X$, equality must hold in above for the constructed $\delta$.
	
	For the conditional case and for each $a\in{\mathcal A}'$, a similar proof can be applied with $\delta$ constructed by choosing $g^{\ast}$ from ${\mathcal G}_a^{\ast}$.
\end{proof}

\begin{lemma}\label{lem:sandwitch}
	If $\delta$ minimizes $R^{\mathcal F} (Y|X^{\delta})$, then, for any $a\in{\mathcal A}'$:
	(1) $R_a^{\mathcal F}(Y|X^{\delta})=R_a^{\mathcal G}(Y|X)$;
	(2) ${\mathcal F}^{\delta}\subset{\mathcal F}_a^{\delta}$.
\end{lemma}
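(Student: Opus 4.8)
The plan is to prove both claims by a single \emph{squeeze} argument: I would build a chain of inequalities linking $R^{\mathcal F}(Y|X^\delta)$, a $P_A$-weighted sum of the conditional $\mathcal F$-risks, the corresponding sum of conditional $\mathcal G$-risks, and $R^{\mathcal G}(Y|X)$, and then show that the two ends coincide, forcing every intermediate inequality to be an equality. As a preliminary step I would record that $\delta$ minimizing $R^{\mathcal F}(Y|X^\delta)$ implies $R^{\mathcal F}(Y|X^\delta)=R^{\mathcal G}(Y|X)$: Lemma~\ref{lem:gdpi_equality} exhibits some $\delta_0$ with $R^{\mathcal F}(Y|X^{\delta_0})=R^{\mathcal G}(Y|X)$, so a minimizer satisfies $R^{\mathcal F}(Y|X^\delta)\le R^{\mathcal G}(Y|X)$, while Lemma~\ref{lem:gdpi} gives the reverse inequality.

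Next I would decompose risks through $P_{XY}=\sum_{a\in{\mathcal A}'}P_A(a)\,P_{XY|a}$ and, correspondingly, $P_{X^\delta Y}=\sum_{a\in{\mathcal A}'}P_A(a)\,P_{X^\delta Y|a}$. On the $\mathcal F$ side, the elementary fact that the minimum of a sum is at least the sum of the minima gives $R^{\mathcal F}(Y|X^\delta)=\min_{f\in{\mathcal F}}\sum_{a}P_A(a)\,\mathbb{E}_{P_{X^\delta Y|a}}[l(Y,f(X^\delta))]\ge\sum_{a}P_A(a)\,R_a^{\mathcal F}(Y|X^\delta)$. On the $\mathcal G$ side the same decomposition applies, but Assumption~\ref{asm:commonality} furnishes a single $g^\ast$ that is Bayes for $P_{XY}$ and for every $P_{XY|a}$ simultaneously, which turns the analogous inequality into the equality $R^{\mathcal G}(Y|X)=\sum_{a}P_A(a)\,R_a^{\mathcal G}(Y|X)$. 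Finally, invoking Lemma~\ref{lem:gdpi} with $P_{XY|a}$ in place of $P_{XY}$ -- legitimate because $X^\delta$ still depends only on $X$, so the conditional independence of $X^\delta$ and $Y$ given $X$ persists under conditioning on $a$ -- yields $R_a^{\mathcal F}(Y|X^\delta)\ge R_a^{\mathcal G}(Y|X)$ for every $a\in{\mathcal A}'$.

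Putting the pieces together, $R^{\mathcal G}(Y|X)=R^{\mathcal F}(Y|X^\delta)\ge\sum_{a}P_A(a)\,R_a^{\mathcal F}(Y|X^\delta)\ge\sum_{a}P_A(a)\,R_a^{\mathcal G}(Y|X)=R^{\mathcal G}(Y|X)$, so both inequalities are equalities. Since $P_A(a)>0$ on ${\mathcal A}'$ and $R_a^{\mathcal F}(Y|X^\delta)\ge R_a^{\mathcal G}(Y|X)$ termwise, equality of the sums forces $R_a^{\mathcal F}(Y|X^\delta)=R_a^{\mathcal G}(Y|X)$ for each $a$, which is claim~(1); it also leaves on record the identity $R^{\mathcal F}(Y|X^\delta)=\sum_{a}P_A(a)\,R_a^{\mathcal F}(Y|X^\delta)$. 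For claim~(2), given $f\in{\mathcal F}^\delta$ I would write $\sum_{a}P_A(a)\,\mathbb{E}_{P_{X^\delta Y|a}}[l(Y,f(X^\delta))]=\mathbb{E}_{P_{X^\delta Y}}[l(Y,f(X^\delta))]=R^{\mathcal F}(Y|X^\delta)=\sum_{a}P_A(a)\,R_a^{\mathcal F}(Y|X^\delta)$; since each summand obeys $\mathbb{E}_{P_{X^\delta Y|a}}[l(Y,f(X^\delta))]\ge R_a^{\mathcal F}(Y|X^\delta)$ with positive weight $P_A(a)$, the two sums can agree only if every summand does, i.e. $f\in{\mathcal F}_a^\delta$.

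I expect the crux to be conceptual rather than computational: identifying precisely where Assumption~\ref{asm:commonality} is indispensable. It is exactly the hinge that upgrades the $\mathcal G$-side inequality ``minimum of a sum $\ge$ sum of minima'' into an equality; without it the chain above degrades to a string of inequalities and the squeeze fails. A lesser point to handle carefully is the conditional use of Lemma~\ref{lem:gdpi}, which is stated for the joint law $P_{XY}$ but applies verbatim to each $P_{XY|a}$ because conditioning on a class pair does not disturb the Markov structure through which $X^\delta$ is generated from $X$.
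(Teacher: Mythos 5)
Your proposal is correct and follows essentially the same route as the paper: the same squeeze chain $R^{\mathcal F}(Y|X^{\delta})\geq\sum_{a}P_A(a)R_a^{\mathcal F}(Y|X^{\delta})\geq\sum_{a}P_A(a)R_a^{\mathcal G}(Y|X)=R^{\mathcal G}(Y|X)$, with Assumption \ref{asm:commonality} supplying the $\mathcal G$-side equality and the hypothesis on $\delta$ (via Lemmas \ref{lem:gdpi} and \ref{lem:gdpi_equality}) closing the loop. The only cosmetic difference is that you prove item (2) directly from termwise equality of the weighted sum, whereas the paper phrases the same argument as a contradiction.
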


\begin{proof}
	Considering an arbitrary $f^{\ast}\in{\mathcal F}^{\delta}$ and an arbitrary $g^{\ast}\in{\mathcal G}^{\ast}$ satisfying $g^{\ast}\in{\mathcal G}_a^{\ast}$ for $\forall a\in{\mathcal A}'$ (existence of such $g^{\ast}$ is guaranteed by Assumption \ref{asm:commonality}), for the estimation of $Y$ from both $X$ and $X^{\delta}$, we have the following relationship between the Bayes risks {\it with} and {\it without} conditioning:
	\begin{align*}
		R^{\mathcal G}(Y|X) &= \mathbb{E}_{P_{XY}} [l(Y, g^{\ast}(X))] \tag*{$\triangleright$ {Eq. \eqref{eq:bayes_classifier_g} and \eqref{eq:bayes_risk_g}}}\\
		&= \sum_{a\in{\mathcal A}'} P_A(a) \mathbb{E}_{P_{XY|a}} [l(Y, g^{\ast}(X))] \tag*{$\triangleright$ Conditioning}\\
		&= \sum_{a\in{\mathcal A}'} P_A(a) R_a^{\mathcal G}(Y|X) \tag*{$\triangleright$ Eq. \eqref{eq:bayes_classifier_g_conditional} and \eqref{eq:bayes_risk_g_conditional}}\\
		R^{\mathcal F}(Y|X^{\delta}) &= \mathbb{E}_{P_{X^{\delta}Y}} [l(Y, f^{\ast}(X^{\delta}))] \tag*{$\triangleright$ {Eq. \eqref{eq:bayes_classifier} and \eqref{eq:bayes_risk}}}\\
		&= \sum_{a\in{\mathcal A}'} P_A(a) \mathbb{E}_{P_{X^{\delta}Y|a}} [l(Y, f^{\ast}(X^{\delta}))] \tag*{$\triangleright$ Conditioning}\\
		&\geq \sum_{a\in{\mathcal A}'} P_A(a) R_a^{\mathcal F}(Y|X^{\delta}) \tag*{$\triangleright$ Eq. \eqref{eq:bayes_classifier_conditional} and \eqref{eq:bayes_risk_conditional}}\\
	\end{align*}
	Combining the above, we have:
	\begin{align}
		R^{\mathcal F}(Y|X^{\delta}) - R^{\mathcal G}(Y|X) &\geq \sum_{a\in{\mathcal A}'} P_A(a) (R_a^{\mathcal F}(Y|X^{\delta}) - R_a^{\mathcal G}(Y|X)) \label{eq:conditioning}\\
		&\geq 0 \tag*{$\triangleright$ Lemma \ref{lem:gdpi}}
	\end{align}
	Since that $\delta$ minimizes $R^{\mathcal F} (Y|X^{\delta})$ is given, by Lemma \ref{lem:gdpi} and Lemma \ref{lem:gdpi_equality}, we have $R^{\mathcal F}(Y|X^{\delta}) - R^{\mathcal G}(Y|X)=0$. Thus, the inequalities above both become equality. Since $P_A(a)>0$ for $\forall a\in{\mathcal A}'$ (see the settings of ${\mathcal A}'$ in Sec. \ref{subsec:analysis_settings}), item (1) of the lemma, i.e. $R_a^{\mathcal F}(Y|X^{\delta})=R_a^{\mathcal G}(Y|X)$ for $\forall a\in{\mathcal A}'$, is proved.
	
	Next, we prove item (2) of the lemma by contradiction. Assume that there exist $a\in{\mathcal A}'$ and $f'\in{\mathcal F}^{\delta}$ such that $f'\notin{\mathcal F}_a^{\delta}$. Then,
	\begin{align*}
		R^{\mathcal F}(Y|X^{\delta}) &= \mathbb{E}_{P_{X^{\delta}Y}} [l(Y, f'(X^{\delta}))] \tag*{$\triangleright$ {Eq. \eqref{eq:bayes_classifier} and \eqref{eq:bayes_risk}}}\\
		&= \sum_{a'\in{\mathcal A}'\setminus a} P_A(a') \mathbb{E}_{P_{X^{\delta}Y|a'}} [l(Y, f'(X^{\delta}))] + P_A(a) \mathbb{E}_{P_{X^{\delta}Y|a}} [l(Y, f'(X^{\delta}))] \tag*{$\triangleright$ Conditioning}\\
		&> \sum_{a'\in{\mathcal A}'\setminus a} P_A(a') R_{a'}^{\mathcal F}(Y|X^{\delta}) + P_A(a) R_{a}^{\mathcal F}(Y|X^{\delta}) \tag*{$\triangleright$ Eq. \eqref{eq:bayes_risk_conditional} and $f'\notin{\mathcal F}_a^{\delta}$}\\
		&= \sum_{a'\in{\mathcal A}'} P_A(a') R_{a'}^{\mathcal F}(Y|X^{\delta})
	\end{align*}
	Thus, the inequality \eqref{eq:conditioning} becomes strict and moreover, $R^{\mathcal F}(Y|X^{\delta}) - R^{\mathcal G}(Y|X)>0$. Here, we have reached a contradiction since $R^{\mathcal F}(Y|X^{\delta}) - R^{\mathcal G}(Y|X)=0$ must hold when $\delta$ minimizes $R^{\mathcal F} (Y|X^{\delta})$ as discussed above.
\end{proof}

\begin{theorem}\label{thm:main_restate}
	({\bf Restatement of Thm. \ref{thm:main}}) {\it For any class pair $a\in{\mathcal A}'$, consider a trigger embedding function $\delta$ that minimizes $R^{\mathcal F}_a(Y|X^{\delta})$. Then, $\delta$ minimizes:
	\begin{equation*}
		\min_{f\in{\mathcal F}_a^{\delta}} \sum_{a'\in{\mathcal A}'\setminus a} P_{A|A\neq a}(a') \mathbb{E}_{P_{X^{\delta}Y|a'}} [l(Y, f(X^{\delta}))]
	\end{equation*}
	if and only if $\delta$ also minimizes $R^{\mathcal F} (Y|X^{\delta})$.}
\end{theorem}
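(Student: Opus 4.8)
The plan is to reduce the biconditional to one explicit lower bound on \eqref{eq:main_lhs} that holds over all admissible $\delta$, and then pin down exactly which $\delta$ attain it. First I would rewrite $P_{A|A\neq a}(a')=P_A(a')/(1-P_A(a))$, so that \eqref{eq:main_lhs} equals $(1-P_A(a))^{-1}\min_{f\in{\mathcal F}_a^{\delta}}\sum_{a'\in{\mathcal A}'\setminus a}P_A(a')\,{\mathbb E}_{P_{X^{\delta}Y|a'}}[l(Y,f(X^{\delta}))]$, and note that the hypothesis of the theorem restricts attention to the set $\Delta_a\subseteq\Delta$ of trigger functions minimizing $R_a^{\mathcal F}(Y|X^{\delta})$. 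The $a$-conditioned analogues of Lemma~\ref{lem:gdpi} and Lemma~\ref{lem:gdpi_equality} (replace the joint law by its $a$-conditional version) give $\min_{\delta}R_a^{\mathcal F}(Y|X^{\delta})=R_a^{\mathcal G}(Y|X)$, hence $R_a^{\mathcal F}(Y|X^{\delta})=R_a^{\mathcal G}(Y|X)$ for every $\delta\in\Delta_a$.

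Next I would derive the lower bound. Fix $\delta\in\Delta_a$ and $f\in{\mathcal F}_a^{\delta}$ and decompose the unconditional risk of $f$ by conditioning on the class pair:
\[
{\mathbb E}_{P_{X^{\delta}Y}}[l(Y,f(X^{\delta}))] = P_A(a)\,{\mathbb E}_{P_{X^{\delta}Y|a}}[l(Y,f(X^{\delta}))] + \sum_{a'\in{\mathcal A}'\setminus a}P_A(a')\,{\mathbb E}_{P_{X^{\delta}Y|a'}}[l(Y,f(X^{\delta}))].
\]
The left-hand side is $\geq R^{\mathcal F}(Y|X^{\delta})\geq R^{\mathcal G}(Y|X)$ by the definition of the Bayes risk and Lemma~\ref{lem:gdpi}, while the first term on the right equals $P_A(a)\,R_a^{\mathcal F}(Y|X^{\delta})=P_A(a)\,R_a^{\mathcal G}(Y|X)$ since $f\in{\mathcal F}_a^{\delta}$ and $\delta\in\Delta_a$. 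Writing $C\triangleq R^{\mathcal G}(Y|X)-P_A(a)\,R_a^{\mathcal G}(Y|X)$, rearranging gives $\sum_{a'\in{\mathcal A}'\setminus a}P_A(a')\,{\mathbb E}_{P_{X^{\delta}Y|a'}}[l(Y,f(X^{\delta}))]\geq C$ for every such $f$, and therefore \eqref{eq:main_lhs}~$\geq C/(1-P_A(a))$ for all $\delta\in\Delta_a$, with $C$ independent of $\delta$.

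For sufficiency, suppose $\delta\in\Delta_a$ also minimizes $R^{\mathcal F}(Y|X^{\delta})$; then $R^{\mathcal F}(Y|X^{\delta})=R^{\mathcal G}(Y|X)$ by Lemmas~\ref{lem:gdpi} and \ref{lem:gdpi_equality}, and ${\mathcal F}^{\delta}\subseteq{\mathcal F}_a^{\delta}$ by Lemma~\ref{lem:sandwitch}(2). Taking any $f^{\ast}\in{\mathcal F}^{\delta}$ and rerunning the decomposition above with equality in both inequalities yields $\sum_{a'\in{\mathcal A}'\setminus a}P_A(a')\,{\mathbb E}_{P_{X^{\delta}Y|a'}}[l(Y,f^{\ast}(X^{\delta}))]=C$; since $f^{\ast}\in{\mathcal F}_a^{\delta}$, the minimum over ${\mathcal F}_a^{\delta}$ is exactly $C$, so \eqref{eq:main_lhs} meets its lower bound $C/(1-P_A(a))$ and $\delta$ is a minimizer.

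For necessity I would argue the contrapositive. If $\delta\in\Delta_a$ does not minimize $R^{\mathcal F}(Y|X^{\delta})$, then $R^{\mathcal F}(Y|X^{\delta})>R^{\mathcal G}(Y|X)$, because the unconditional minimum equals $R^{\mathcal G}(Y|X)$ by Lemmas~\ref{lem:gdpi}--\ref{lem:gdpi_equality} and is not attained here; the decomposition bound then becomes strict for every $f\in{\mathcal F}_a^{\delta}$, so \eqref{eq:main_lhs}~$>C/(1-P_A(a))$, whereas Lemma~\ref{lem:gdpi_equality} supplies some $\delta^{\ast}$ with $R^{\mathcal F}(Y|X^{\delta^{\ast}})=R^{\mathcal G}(Y|X)$, which lies in $\Delta_a$ by Lemma~\ref{lem:sandwitch}(1) and attains $C/(1-P_A(a))$ by the sufficiency argument --- so $\delta$ fails to be a minimizer. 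I expect the main difficulties to be bookkeeping rather than conceptual: stating the $a$-conditioned versions of Lemmas~\ref{lem:gdpi} and \ref{lem:gdpi_equality} precisely, verifying that the optimal $\delta^{\ast}$ remains inside the constraint set $\Delta_a$ (exactly what Lemma~\ref{lem:sandwitch}(1) provides), and confirming that the pointwise strict inequality over $f$ is inherited by the minimum, which relies on the paper's standing assumption that the relevant minima are attained.
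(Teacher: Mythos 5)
Your proposal is correct and rests on the same three lemmas (Lemmas \ref{lem:gdpi}, \ref{lem:gdpi_equality}, and \ref{lem:sandwitch}) and the same overall skeleton --- establish a $\delta$-independent lower bound on \eqref{eq:main_lhs}, then show it is attained exactly when $\delta$ minimizes $R^{\mathcal F}(Y|X^{\delta})$ --- but the lower bound itself is derived by a genuinely different decomposition. The paper bounds the sum term by term, $\mathbb{E}_{P_{X^{\delta}Y|a'}}[l(Y,f(X^{\delta}))]\geq R_{a'}^{\mathcal F}(Y|X^{\delta})\geq R_{a'}^{\mathcal G}(Y|X)$, arriving at $\sum_{a'\neq a}P_{A|A\neq a}(a')R_{a'}^{\mathcal G}(Y|X)$; you instead bound the whole sum at once by writing it as the unconditional risk minus the $a$-conditional term, giving $C=R^{\mathcal G}(Y|X)-P_A(a)R_a^{\mathcal G}(Y|X)$. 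Under Assumption \ref{asm:commonality} (via the identity $R^{\mathcal G}(Y|X)=\sum_{a}P_A(a)R_a^{\mathcal G}(Y|X)$ established inside Lemma \ref{lem:sandwitch}) the two constants coincide, but your derivation does not need that identity and is in general at least as tight. Your necessity argument is also cleaner: rather than the paper's case split into (A) and (B), you observe directly that the whole-sum bound becomes strict for every $f\in{\mathcal F}_a^{\delta}$ and exhibit an attaining $\delta^{\ast}$ (correctly using Lemma \ref{lem:sandwitch}(1) to place it in $\Delta_a$), which makes the ``fails to be a minimizer'' conclusion self-contained. The one thing the paper's version buys that yours does not is that its term-by-term bound holds for \emph{all} $\delta\in\Delta$, whereas your bound $C$ is only valid on $\Delta_a$ (for $\delta\notin\Delta_a$ the subtracted term $P_A(a)R_a^{\mathcal F}(Y|X^{\delta})$ exceeds $P_A(a)R_a^{\mathcal G}(Y|X)$ and your inequality weakens); this is immaterial under the natural reading that the minimization in the theorem ranges over triggers satisfying the premise, which you state explicitly, but it is worth flagging if one insists on minimizing \eqref{eq:main_lhs} over all of $\Delta$. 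Both arguments lean equally on the standing assumption that the relevant minima are attained, which you acknowledge.
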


\begin{proof}
	For any $\delta\in\Delta$ and $a\in{\mathcal A}'$, we have the following lower bound for the minimum:
	\begin{align*}
		&\min_{f\in{\mathcal F}_a^{\delta}} \sum_{a'\in{\mathcal A}'\setminus a} P_{A|A\neq a}(a') \mathbb{E}_{P_{X^{\delta}Y|a'}} [l(Y, f(X^{\delta}))] \\
		\geq &\sum_{a'\in{\mathcal A}'\setminus a} P_{A|A\neq a}(a') R_{a'}^{\mathcal F}(Y|X^{\delta}) \tag*{$\triangleright$ Eq. \eqref{eq:bayes_risk_conditional} \quad $(\ast)$}\\
		\geq &\sum_{a'\in{\mathcal A}'\setminus a} P_{A|A\neq a}(a') R_{a'}^{\mathcal G}(Y|X) \tag*{$\triangleright$ Lemma \ref{lem:gdpi} \quad $(\ast\ast)$}
	\end{align*}
	{\bf Proof of sufficiency} We show that if $\delta$ minimizes $R^{\mathcal F} (Y|X^{\delta})$, the lower bound above will be reached, i.e. equality holds for both $(\ast)$ and $(\ast\ast)$. First, by item (2) of Lemma \ref{lem:sandwitch}, there exist $f^{\ast}\in{\mathcal F}_a^{\delta}$ satisfying $f^{\ast}\in{\mathcal F}_{a'}^{\delta}$ for $\forall a'\in{\mathcal A}'\setminus a$. Thus, based on Eq. \eqref{eq:bayes_classifier_conditional}, equality holds for $(\ast)$. Next, by item (1) of Lemma \ref{lem:sandwitch}, equality holds for $(\ast\ast)$.
	
	{\bf Proof of necessity} We prove by contradiction. Suppose $\delta$ {\it does not} minimize $R^{\mathcal F} (Y|X^{\delta})$, by Lemma \ref{lem:gdpi} and Lemma \ref{lem:gdpi_equality}, we will have:
	\begin{equation*}
		R^{\mathcal F}(Y|X^{\delta}) - R^{\mathcal G}(Y|X)>0
	\end{equation*}
	Then, based on inequality \eqref{eq:conditioning}, {\it at least one} of the following must hold:
	\begin{equation*}
		\begin{aligned}
			\text{(A)} \quad & R^{\mathcal F}(Y|X^{\delta}) > \sum_{a'\in{\mathcal A}'} P_A(a') R_{a'}^{\mathcal F}(Y|X^{\delta})\\
			\text{or \quad (B)} \quad & R_{a'}^{\mathcal F}(Y|X^{\delta}) - R_{a'}^{\mathcal G}(Y|X) > 0 \quad \text{for some} \,\, a'\in{\mathcal A}'
		\end{aligned}
	\end{equation*}
	If (B) holds, we will further have $R_{a'}^{\mathcal F}(Y|X^{\delta}) - R_{a'}^{\mathcal G}(Y|X) > 0$ for some $a'\in{\mathcal A}'\setminus a$. This is because for the given $a$, $R_{a}^{\mathcal F}(Y|X^{\delta}) - R_{a}^{\mathcal G}(Y|X) = 0$ due to both that $\delta$ minimizes $R_a^{\mathcal F}(Y|X^{\delta})$ and the existence of such minimum (based on Lemma \ref{lem:gdpi_equality}). Then, equality {\it cannot} be achieved for $(\ast\ast)$ and we have reached a contradiction.
	
	But if (B) does not hold, (A) must hold. Again, for $(\ast)$ being equal, there must exist $f^{\ast}\in{\mathcal F}_a^{\delta}$ satisfying $f^{\ast}\in{\mathcal F}_{a'}^{\delta}$ for $\forall a'\in{\mathcal A}'\setminus a$. In other words, there exists (at least one) $f^{\ast}\in\cup_{a'\in{\mathcal A}'}{\mathcal F}_{a'}^{\delta}\neq\emptyset$. Thus, we have:
	\begin{align*}
		R^{\mathcal F}(Y|X^{\delta}) &\leq \mathbb{E}_{P_{X^{\delta}Y}} [l(Y, f^{\ast}(X^{\delta}))] \tag*{$\triangleright$ Eq. \eqref{eq:bayes_risk}}\\
		&= \sum_{a'\in{\mathcal A}'} P_A(a') \mathbb{E}_{P_{X^{\delta}Y|a'}} [l(Y, f^{\ast}(X^{\delta}))] \tag*{$\triangleright$ Conditioning}\\
		&= \sum_{a'\in{\mathcal A}'} P_A(a') R_{a'}^{\mathcal F}(Y|X^{\delta}) \tag*{$\triangleright$ Eq. \eqref{eq:bayes_risk_conditional}}
	\end{align*}
	This is a clear contradiction with (A).
\end{proof}

\newpage
\section{Supplementary of the Main Experiments on Backdoor Model Detection}\label{sec:exp_main_supp}

\subsection{Details for the Datasets}\label{subsec:exp_main_supp_dataset}

CIFAR-10 is a benchmark dataset with $32\times 32$ color images from 10 classes for different categories of objects \cite{CIFAR10}. The training set contains 50,000 images and the test set contains 10,000 images, both evenly distributed in the 10 classes.

GTSRB is an image dataset for German traffic signs from 43 classes \cite{GTSRB}. The training set and the test set contain 39,209 and 12,630 images respectively. The image sizes vary in a relatively large range. Thus, we resize all the images to $32\times 32$ in our experiments for convenience.

Imagenette consists of $224\times 224$ color images from ten selected classes of the ImageNet dataset \cite{ImageNet} that are easily classified. The training set and the test set contain 9,469 and 3,925 images respectively.

\subsection{Details for the Backdoor Triggers}\label{subsec:exp_main_supp_trigger}

In our experiments in Sec. \ref{sec:exp}, we considered a global, perturbation-based trigger with a big `X' shape (dubbed ``Pert''), and a local patch trigger (dubbed ``Patch''). The Pert trigger is generated by positively perturbing each pixel on both diagonals of the image by the same perturbation size for all three color channels. For CIFAR-10, GTSRB, and Imagenette, we set the perturbation size to 5/255, 15/255, and 15/255, respectively. For the Patch trigger, we replace a small area of the image (for all three channels) with an image patch with the same shape and size. For CIFAR-10, GTSRB, and Imagenette, we use $3\times3$, $2\times2$, and $8\times 8$ square patches respectively. For each attack, the location for the patch replacement and the color for each pixel in the patch are both randomly selected. Examples of both triggers and the image embedded with each trigger (compared with the original, trigger-free image) are shown in Fig. \ref{fig:trigger_example}.

\begin{figure}[h]
\centering
\begin{minipage}[b]{0.7\linewidth}
    \centering
    \includegraphics[width=.3\linewidth]{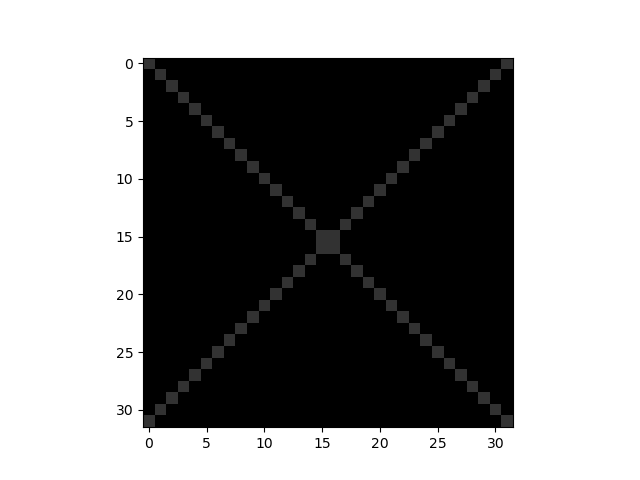}
    \includegraphics[width=.3\linewidth]{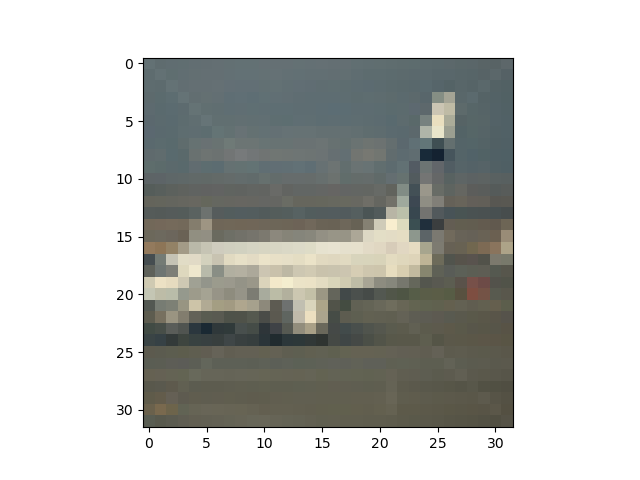}
    \includegraphics[width=.3\linewidth]{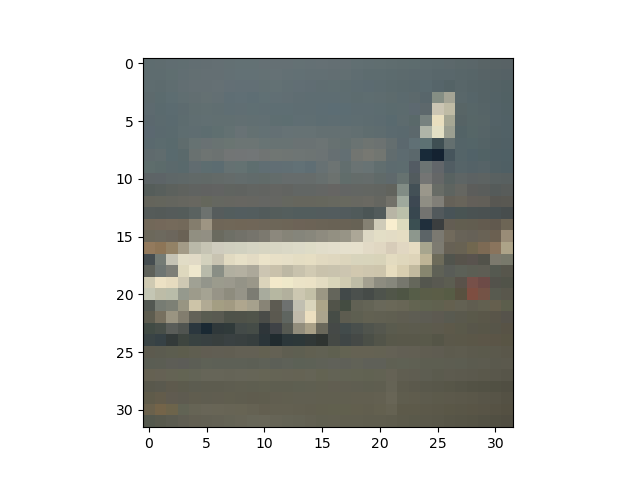}
\end{minipage}
\begin{minipage}[b]{0.7\linewidth}
    \centering
    \includegraphics[width=.3\linewidth]{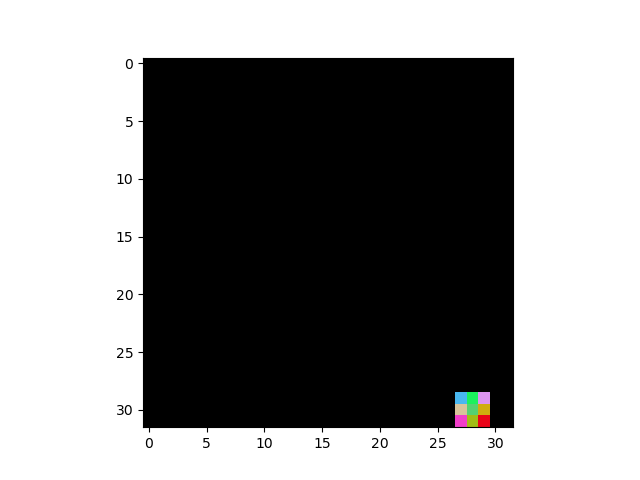}
    \includegraphics[width=.3\linewidth]{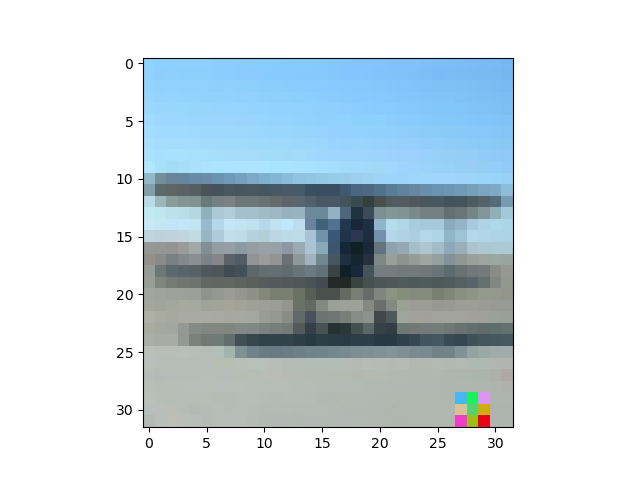}
    \includegraphics[width=.3\linewidth]{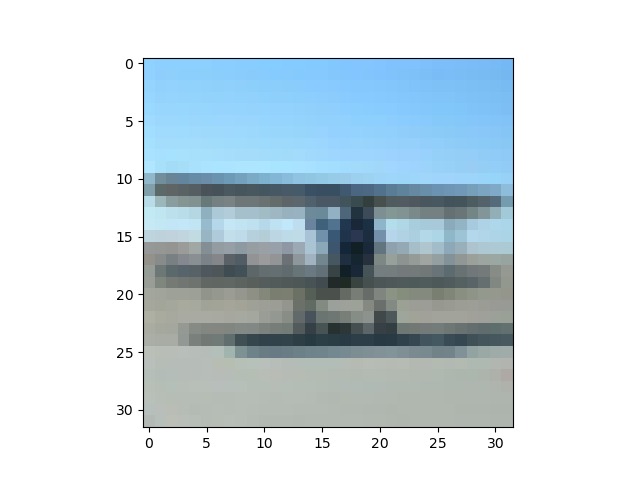}
\end{minipage}
\caption{Top: example of the Pert trigger (amplified to 50/255 perturbation size for better visualization), an image (from CIFAR-10) embedded with the Pert trigger (with perturbation size 5/255), and the original clean image without the trigger. Bottom: example of the Patch trigger, an image (also from CIFAR-10) embedded with the Patch trigger, and the original clean image without the trigger.}
\label{fig:trigger_example}
\end{figure}

\subsection{Training Configurations and Attack Effectiveness}\label{subsec:exp_main_supp_training}

For all three datasets, the training is performed on the training set specified in Apdx. \ref{subsec:exp_main_supp_dataset}. For CIFAR-10 and Imagenette, the training images are augmented by random horizontal flipping. For GTSRB, the training images are augmented by random rotation of $\pm 5$ degrees. We use ResNet-18 \cite{ResNet} as the model architecture for CIFAR-10 and Imagenette. For GTSRB, we use the model with the top performance on the leaderboard \cite{GTSRB_Leaderboard}. 
For CIFAR-10, GTSRB, and Imagenette, training is performed using the Adam optimizer \cite{Adam} for 200, 100, and 80 epochs, respectively, 
with a learning rate of $10^{-3}$ and a mini-batch size of 64. 
When there is no attack, this training configuration achieves around 93\%, 98\%, and 88\% accuracy (ACC) for the three datasets, respectively.
The same set of configurations is also used for training the classifiers under the attacks we created.
The effectiveness of an attack is jointly measured by the ASR and ACC of the model.
The ASR for an X2X attack is the misclassification rate from the backdoor source classes to their designated target class when the samples from these source classes are embedded with the backdoor trigger.
In Tab. \ref{tab:asr_acc}, for each combination of the dataset, trigger, and attack setting, we show the average and the minimum ASR, together with the average and the minimum ACC for the ten classifiers we trained.
As a reference, the average and the minimum ACC for the ten benign classifiers for each dataset are also shown in Tab. \ref{tab:asr_acc}.

\begin{table}[t!]
\setlength{\tabcolsep}{4pt}
    \caption{The average (avg) and the minimum (min) ASR and ACC for each combination of the trigger and the attack setting on CIFAR-10, GTSRB, and Imagenette, and the average and the minimum ACC for the benign classifiers on each dataset for reference. The ASRs and the ACCs are all in percentage. All attacks we created are successful with ASR $\geq78\%$.}
\centering{
\scalebox{0.83}{%
\begin{tabular}{cc|ccccccc}
    \multicolumn{6}{c}{(a) CIFAR-10}\\
\toprule \hline
\multirow{1}*{Setting}  & Attack 
  &  avg ASR & min ASR & avg ACC & min ACC\\
    \hline
    Benign & - & -&- &93.54 &93.24\\
    \hline
    \multirow{2}*{A2O} & Patch & 99.73 & 98.08 & 93.08 &92.59\\
    &Pert&97.87 &95.87 &93.06 &92.76 \\\hline
    \multirow{2}*{2to2} & Patch & 97.90 & 95.80 & 92.76 & 91.70\\
    &Pert & 98.06 & 95.60 & 91.85 & 91.62 \\
    \hline
    \multirow{2}*{5to5} & Patch & 93.49 & 90.20 & 93.14 & 92.66\\
    &Pert & 93.10 & 87.12 & 91.91 & 91.48\\
    \hline
    \multirow{2}*{8to8} & Patch & 91.71 & 90.12 & 93.15 & 92.62\\
    &Pert & 90.06 & 87.85 & 91.96 & 91.34 \\
    \hline
    \multirow{2}*{A2Ar} & Patch&91.44 &89.88 &93.32 &92.92\\
    &Pert&87.27 &86.62 &93.35 &93.08 \\
    \hline \bottomrule
    \multicolumn{6}{c}{(b) GTSRB}\\
\toprule \hline
				\multirow{1}*{Setting}  & Attack 
  &  avg ASR & min ASR & avg ACC & min ACC\\
    \hline
    Benign & - & -&-& 98.46 & 98.27   \\
    \hline
    \multirow{2}*{A2O} & Patch & 99.99 & 99.93 & 98.12 & 97.69\\
    &Pert& 98.45 & 98.01 & 97.63 & 97.36 \\\hline
    \multirow{2}*{20to20} & Patch & 96.82 & 93.86 & 98.16 & 97.88\\
    &Pert & 95.49 & 93.92 & 98.05 & 97.60 \\
    \hline
    \multirow{2}*{30to30} & Patch & 95.85 & 91.75 & 98.13 & 97.77\\
    &Pert & 93.67 & 90.35 & 98.14 & 97.69 \\
    \hline
    \multirow{2}*{40to40} & Patch & 94.10 & 88.37 & 98.02 & 97.66\\
    &Pert & 93.58 & 92.11 & 98.10 & 97.78\\
    \hline
    \multirow{2}*{A2Ar} & Patch &94.44 & 93.37 & 98.14 &97.89\\
    &Pert & 93.55 & 92.13 & 98.26 & 98.04\\
    \hline \bottomrule
    \multicolumn{6} {c} {(c) ImageNette}\\
\toprule \hline
    \multirow{1}*{Setting}  & Attack 
  &  avg ASR & min ASR & avg ACC & min ACC\\
    \hline
    Benign & - & -&-& 88.81 & 87.95 \\
    \hline
    \multirow{2}*{A2O} & Patch & 99.51 & 99.14 & 88.36 & 87.57\\
    &Pert & 99.70 & 99.53 & 88.52 & 88.20 \\\hline
    \multirow{2}*{3to3} & Patch & 90.19 & 83.06 & 87.74 & 85.43\\
    &Pert & 92.66 & 89.71 & 88.36 & 87.85 \\
    \hline
    \multirow{2}*{5to5} & Patch & 81.04 & 78.26 & 87.46 & 85.89\\
    &Pert & 88.71 & 86.17 & 88.58 & 87.90 \\
    \hline
    \multirow{2}*{8to8} & Patch &- & - & - & -\\
    &Pert & 83.47 & 79.05 & 87.91 & 86.42\\
    \hline
    \multirow{2}*{A2Ar} & Patch &- & - & - & -\\
    &Pert & 82.31 & 81.20 & 88.17 & 87.44\\
    \hline \bottomrule
\end{tabular}
}
}
	\label{tab:asr_acc}
\end{table}

\subsection{Review of the Model Detection Methods Compared in Our Experiments}\label{subsec:exp_main_supp_review}

{\bf Neural Cleanse (NC)} is a typical reverse-engineering-based model detection method \cite{NC}.
It assumes an A2O attack and reverse-engineers a patch trigger with a size as small as possible for each putative target class using the algorithm described in Sec. \ref{subsubsec:exp_main_supp_review_re_patch}.
The premise behind NC is that the backdoor trigger will likely have a small size for human imperceptibility (which is generally true in practice), while the minimum size of a common patch that induces a large fraction of images to be misclassified to a non-backdoor target class will likely be large. 
With a trigger reverse-engineered for each class, NC adopts an unsupervised, MAD-based anomaly detector to infer if, for any class, the size of the reverse-engineered trigger is abnormally small based on a derived anomaly score.
The classifier is deemed to be attacked if the anomaly score is larger than a prescribed threshold (which indicates the existence of a reverse-engineered trigger with abnormally small size).
In our experiments, we use 20 clean images per class for detection and setting the threshold of the anomaly score to 2 \cite{NC}.
Note that this threshold, though claimed to be associated with a 95\% detection confidence level, implicitly assumes that the estimation of the MAD uses only a single null statistic, while the actual anomaly detection procedure of NC uses all the trigger size statistics for the estimation of MAD.
Moreover, threshold 2 is associated with the assumption that an anomaly may exist on both tails of the null distribution, i.e. both overly small and overly large trigger sizes are considered outliers, though a true detection should only be triggered by abnormally small trigger sizes (i.e. the small outliers).
Differently, our UMD determines a (single-tailed) confidence threshold based on the actual number of null statistics used for the estimation of MAD (see Sec. \ref{subsubsec:detection_inference}), which is more robust than NC to the changes of the domain size.
Note that based on Eq. \eqref{eq:threshold}, the same threshold 2 used by NC will be obtained if we set $N=1$ (for a single null statistic) and $\beta=0.025$ (for a single-tailed 0.025 significance level).
Despite the issue with the detection threshold, NC is not able to detect most X2X attacks except A2O attacks\footnote{A variant of NC with class-pair-wise trigger reverse-engineering was suggested by \citeauthor{NC} (2019) for detecting X2O attacks but without adequate evaluation on complicated datasets beyond MNIST \cite{MNIST}.} by design.
Moreover, NC is not implemented with class pair detection since once an attack is detected, all the class pairs with the target class being the detected target class will be treated as backdoor class pairs (by the definition of A2O attacks).

{\bf ABS} is also a reverse-engineering-based detector that assumes an A2O setting for potential attacks \cite{ABS}.
But before reverse-engineering the trigger, ABS first identifies a subset of neurons (e.g.) from the penultimate layer with the largest ``stimulation'' to particular neurons in the output layer.
That is, for any of these identified neurons, a large activation will subsequently lead to a large value for some neurons in the output layer.
Thus, ABS performs trigger reverse-engineering with a constraint to only boost the activation of these selected neurons.
The premise behind the design is that backdoor triggers will likely cause a large activation for some neurons in the intermediate layers.
Then, for each putative target class, the reverse-engineered trigger is embedded into a set of clean images and a REASR score is obtained as the misclassification fraction to the target class for these trigger-embedded images.
In the inference step, a larger REASR indicates that the classifier is more likely to be attacked.
Note that REASR is actually the ``transferability'' of the reverse-engineered trigger from one group of samples to another with respect to the same target class.
It is different from our TR statistic designed for each ordered pair of class pairs and does not endow ABS with the capability to detect general X2X attacks except A2O attacks.
In our experiments, we follow the descriptions in the original ABS paper by using one image per class and selecting 10 neurons from the penultimate layer of each classifier for detection.
For each putative target class, 30\% of the images are used for trigger reverse-engineering, and the remaining 70\% images are used to compute the REASR score.
Since ABS does not propose a practical method to select a threshold for the REASR score in an unsupervised fashion, in our experiments, {\it based on the resulting REASR scores}, we choose the threshold for ABS that keeps an approximately 95\% false detection rate across all three datasets (for a fair comparison with other methods adopting the same confidence level) while maximizing the overall true positive rate.

{\bf PT-RED} detects imperceptible, perturbation-based triggers by performing trigger reverse-engineering for each class pair. However, its inference step, which is based on probabilistic modeling with a threshold that controls the false detection rate, relies on the assumption of a single backdoor target class. Thus, it is capable of detecting X-to-one attacks with an inference of the source classes. But still, PT-RED cannot handle X2X attacks with more than one backdoor target class. In our experiments, we use 10 images per class for PT-RED and set the desired false detection rate to 5\% (i.e. 95\% confidence) based on the original paper.

{\bf MNTD} trains a binary\footnote{A one-class variant of MNTD is also proposed as a baseline by \citeauthor{META} (2021). However, the performance of this variant is not comparable to MNTD with the binary meta-classifier, thus is not evaluated in our experiments.} meta-classifier on features extracted from a large number of shadow classifiers with and without attack. Given a classifier to be inspected, features extracted from the classifier following the same procedure as for the shadow classifiers are fed into the meta-classifier to produce a score -- if a score is larger than a prescribed threshold, the classifier is deemed to be attacked, otherwise, it is not attacked. However, for the unsupervised model detection problem, a proper threshold is hard to choose. Thus, in our experiments, we choose a threshold based on the resulting scores to fix a 5\% false detection rate (i.e. 95\% confidence) while maximizing the true detection rate for a fair comparison with other methods. Moreover, since MNTD cannot cover the enormous space of attack settings for the X2X attack (see Fig. \ref{fig:venn_map}) when training the shadow classifiers, its effectiveness largely depends on the generalization capability of the attack settings (as well as the model architecture, the trigger, and so on) from the shadow classifiers to the actual classifier being attacked. Thus, based on the design, it is questionable for MNTD to detect X2X attacks with arbitrary settings.
In our experiments, we train a meta-classifier for each dataset using the code provided by the authors of META.
In particular, the shadow classifiers with the attack are trained in the A2O setting.
From our empirical results in Tab. \ref{tab:detection}, MNTD does not perform well even against A2O attacks, showing a poor generalization of the model architecture from the shadow models to the actual models to be inspected.
Finally, MNTD only performs model detection without the inference of backdoor class pairs.

{\bf K-Arm} focuses on solving the trigger reverse-engineering problem for each putative target class without knowing the actual number of source classes. Again, it well-addresses the X-to-one attacks, but cannot detect X2X attacks with more than one target class. Moreover, K-Arm uses the reverse-engineered trigger size for detection inference, which requires supervision for picking a threshold. In our experiments, we pick a threshold for K-Arm for each dataset and for each trigger type to control the false detection rate to 5\% while maximizing the true positive rate.
For the reverse-engineering step, we use 40 images per class. Finally, like all the other methods reviewed above, K-Arm is not implemented with backdoor class pair inference.

\subsection{Trigger Reverse-Engineering Algorithms}\label{subsec:exp_main_supp_review_re}

In this paper, we have considered three trigger reverse-engineering algorithms. In our experiments in Sec. \ref{sec:exp}, we equip UMD with the algorithms used by PT-RED \cite{Post-TNNLS} and NC \cite{NC} to address the perturbation trigger and the patch trigger respectively. In our experiments in Sec. \ref{subsec:exp_adaptive}, we show that UMD can even incorporate with the intermediate-layer trigger reverse-engineering technique \cite{Post-TNNLS} to address the stronger sample-specific backdoor attack. Here, we introduce these algorithms in detail.

\subsubsection{Reverse-Engineering Perturbation Triggers}\label{subsubsec:exp_main_supp_review_re_pert}

Perturbation triggers take the form $\delta(X)=[X+v]_{\rm c}$ for the embedding function where $v$ is a perturbation with a small $||v||_2$ for human imperceptibility and $[\cdot]_{\rm c}$ is a clipping function. Thus, reverse-engineering a perturbation trigger solves problem \eqref{eq:re_main} with $d(X, \delta(X))=||X-\delta(X)||_2\approx||v||_2$, i.e.:
\begin{equation*}
	\begin{aligned}
		& \underset{v}{\text{minimize}}
		& & ||v||_2\\
		& \text{subject to}
		& & v \in \argmin_{v'} {\mathbb E}_{P_{XY|a}}[l(Y, f([X+v']_{\rm c}))]
	\end{aligned}
\end{equation*}
Empirically, for class pair $a=(s, t)$ and $l$ being the 0-1 loss, the above problem can be reformulated as \cite{Post-TNNLS}:
\begin{equation}\label{eq:re_pert}
	\begin{aligned}
		& \underset{v}{\text{minimize}}
		& & ||v||_2\\
		& \text{subject to}
		& & \frac{1}{|{\mathcal D}_s|} \sum_{x\in{\mathcal D}_s} \mathds{1}[f([x+v]_{\rm c})=t]\geq \pi
	\end{aligned}
\end{equation}
where ${\mathcal D}_s$ is the subset of samples in ${\mathcal D}_{\rm c}$ from class $s$, $\mathds{1}[\cdot]$ is the indicator function (for counting the number of misclassifications from class $s$ to class $t$), and $\pi$ is a targeted misclassification fraction (which approximates one minus the Bayes error rate in practice).
Typically, $\pi$ is set large for a relatively large ``pair ASR'' assumed for a successful attack.
But an overly large $\pi$ may not be achievable for a backdoor class pair even with the actual trigger used by the attacker.
In practice, $\pi$ can be set large but not clearly larger than the ACC of the classifier to be inspected (which can be evaluated on the small dataset possessed by the defender).
The reasons are the following. 
For X2X attacks with an A2Ar setting, the ASR of a successful attack will not likely exceed the ACC.
For X2X attacks with other settings, the ASR of a successful attack may be larger than the ACC (even close to 100\%).
Since there is no prior knowledge about the attack setting, having a large $\pi$ without exceeding the ACC much will enlarge the probability for the trigger reverse-engineered for backdoor class pairs being close to the actual trigger used by the attacker.
Thus, in our experiments, we set $\pi=0.9$ for all datasets, which is sufficiently large without clearly exceeding the ACC of the classifiers to be inspected.

To solve \eqref{eq:re_pert} in practice, we minimize the following differentiable surrogate objective function using stochastic gradient descent \cite{Post-TNNLS}:
\begin{equation}\label{eq:re_pert_obj}
J_{st}^{\rm pert}(v) = -\frac{1}{|\mathcal{D}_s|}\sum_{x\in\mathcal{D}_s} p(t|[x + v]_c),
\end{equation}
with learning rate $10^{-4}$ and initial $v=0$. $p(t|x)$ denotes the classifier's posterior for class $t$ for arbitrary input $x\in{\mathcal X}$. The minimization of Eq. \eqref{eq:re_pert_obj} terminates when $\pi$ misclassification is achieved on $\mathcal{D}_s$.

\subsubsection{Reverse-Engineering Patch Triggers}\label{subsubsec:exp_main_supp_review_re_patch}

Patch triggers take the form $\delta(X)=(1-m)\odot X + m\odot u$, where $u$ is a small image patch, $m$ is a binary mask, and $\odot$ represents element-wise multiplication. For human imperceptibility, the patch size, which is solely determined by $m$, is usually small. Thus, the distance metric in problem \eqref{eq:re_main} can be specified by $d(X, \delta(X))=||X-\delta(X)||_0\approx||m||_0$. Accordingly, for each class pair $a=(s, t)$, we solve:
\begin{equation*}
	\begin{aligned}
		& \underset{\{m,u\}}{\text{minimize}}
		& & ||m||_0\\
		& \text{subject to}
		& & \{u,m\} \in \argmin_{\{u',m'\}} {\mathbb E}_{P_{XY|a}}[l(Y, f((1-m')\odot X + m'\odot u'))]
	\end{aligned}
\end{equation*}
Similarly, the problem above can be reformulated as the following:
\begin{equation}\label{eq:re_patch}
	\begin{aligned}
		& \underset{\{m,u\}}{\text{minimize}}
		& & ||m||_0\\
		& \text{subject to}
		& & \frac{1}{|{\mathcal D}_s|} \sum_{x\in{\mathcal D}_s} \mathds{1}[f((1-m)\odot x + m\odot u))=t]\geq \pi
	\end{aligned}
\end{equation}
Again, we set $\pi=0.9$ for all datasets considered in our experiments. Then, problem \eqref{eq:re_patch} can be solved by minimizing the surrogate objective function proposed by NC \cite{NC}:
\begin{equation}\label{eq:re_patch_obj}
J_{st}^{\rm patch}(u, m) = -\frac{1}{|\mathcal{D}_s|}\sum_{x\in\mathcal{D}_s} \log p(t|(1-m)\odot X + m\odot u)) + \lambda ||m||_1,
\end{equation}
where $\lambda$ is the Lagrange multiplier and the patch size is measured using the $\ell_1$ norm (instead of the $\ell_0$ norm in problem \eqref{eq:re_patch}) for differentiability. As suggested by \citeauthor{NC} (2019), the mask $m$ and the patch $u$ are both initialized to be image-wide and with initial values around 0.5 (for pixel values in [0, 1]) when minimizing Eq. \eqref{eq:re_patch_obj}. The multiplier $\lambda$ is adjusted based on whether the $\pi$ misclassification fraction from class $s$ to class $t$ (i.e. the constraint of problem \eqref{eq:re_patch} is achieved). More details about such adjustment and the learning rate can be found in the original implementation provided by \citeauthor{NC} (2019). To avoid poor local optimum when minimizing Eq. \eqref{eq:re_patch_obj}, we solve problem \eqref{eq:re_patch} for multiple times (e.g. 5 trials for CIFAR-10 and Imagenette and 3 trials for GTSRB), each with a randomly initialized $m$ and $u$. The solution with the minimum $||m||_1$ over all trials is deemed to be the reverse-engineered trigger.

\subsubsection{Reverse-Engineering Sample-Specific Triggers}\label{subsubsec:exp_main_supp_review_re_ss}

In fact, sample-specific triggers still use a {\it common} $\delta$, which may be as sophisticated as an autoencoder, for trigger embedding. The term ``sample-specific'' actually refers to that $\delta(x_i) - x_i$ and $\delta(x_j) - x_j$ are different for different samples $x_i$ and $x_j$. Unfortunately, accurate estimation of $\delta$ (e.g. estimating all the parameters of $\delta$ if it is an autoencoder) for a sample-specific trigger is still an open problem. But using the method proposed by \citeauthor{Post-TNNLS} (2020), we can estimate a simple additive perturbation in the intermediate layer of the classifier to approximate $\delta$. More specifically, suppose $f=f_2\circ f_1$ where $f_1:{\mathcal X}\rightarrow{\mathcal Z}$ maps an input to the intermediate feature space ${\mathcal Z}$ and $f_2:{\mathcal Z}\rightarrow{\mathcal Y}$ maps an intermediate feature to the output space ${\mathcal Y}$. For each class pair $a=(s, t)$, we solve:
\begin{equation*}\label{eq:re_inter}
	\begin{aligned}
		& \underset{w}{\text{minimize}}
		& & ||w||_2\\
		& \text{subject to}
		& & \frac{1}{|{\mathcal D}_s|} \sum_{x\in{\mathcal D}_s} \mathds{1}[f_2(f_1(x)+w)=t]\geq \pi
	\end{aligned}
\end{equation*}
by minimizing:
\begin{equation*}
J_{st}^{\rm inter}(w) = -\frac{1}{|\mathcal{D}_s|}\sum_{x\in\mathcal{D}_s} p'(t|f_1(x)+w),
\end{equation*}
using the same settings as for perturbation reverse-engineering in the input layer. Here, $p'(t|\cdot)$ denotes the posterior of class $t$ for intermediate features. In the experiments in Sec. \ref{subsec:exp_adaptive}, our UMD uses this technique to reverse-engineer the sample-specific trigger embedded by WaNet at the output layer of the first ``block'' of ResNet-18 (with four ``blocks'' in total) \cite{ResNet} and achieves excellent detection performance in both model inference and pair inference.

\subsection{Additional Results: Detection Performance of UMD with Different Choice of Confidence Level}\label{subsec:exp_main_supp_confidence}

In Sec. \ref{sec:exp} of the main paper, we showed the detection performance of UMD for a 95\% confidence level for a fair comparison with the SOTA baselines. Here, in Tab. \ref{tab:detection_model_confidence}, we show the model detection performance (via MIA) of UMD for a range of confidence levels from 0.6 to 0.999.
Clearly, more aggressive confidence thresholds (with a confidence level $<0.95\%$) slightly increase the true positive rate (i.e. an increment in MIA for classifiers being attacked) at the cost of a slight increment in the false positive rate (i.e. a decrement in MIA for benign classifiers).
On the other hand, more conservative thresholds (with a confidence level $>0.95\%$) slightly reduce the false positive rate, but the true positive rate is not affected much.
The results show that UMD prefers a more conservative confidence level since the attacks are typically associated with a large anomaly score if the putative backdoor class pairs are correctly selected.

\begin{table*}[h]
	\setlength{\tabcolsep}{4.5pt}
        \caption{MIA of our UMD for confidence levels (i.e. $1-\beta$) 0.6, 0.8, 0.9, 0.95, 0.99, and 0.999. Large confidence thresholds are helpful to reduce the false positive rate without much degradation in the true positive rate.}
	\centering{
\scalebox{0.83}{%
\begin{tabular}{cc|cccccccc}
    \multicolumn{7}{c}{(a) CIFAR-10}\\
\toprule \hline
\multicolumn{2}{l}{Setting} 
  &  benign & A2O & 2to2 & 5to5 & 8to8 & A2Ar\\
    \hline
    \multicolumn{2}{l}{$1-\beta= 0.6$} & 0.70 & 0.95 & 0.90 & 0.95 & 0.85 & 1.0\\
    \hline
   \multicolumn{2}{l}{$1-\beta= 0.8$}& 0.80 & 0.95 & 0.90 & 0.95 & 0.85  & 1.0\\\hline
    \multicolumn{2}{l}{$1-\beta= 0.9$} &  0.80 & 0.90 & 0.90 & 0.95 & 0.85 & 0.95\\
    \hline
    \multicolumn{2}{l}{$1-\beta= 0.95$} & 0.90 & 0.90 & 0.90 & 0.95 & 0.85 & 0.95 \\
    \hline
    \multicolumn{2}{l}{$1-\beta= 0.99$} & 0.90 & 0.90 &0.85 & 0.95&0.85 &0.95 \\
    \hline
    \multicolumn{2}{l}{$1-\beta= 0.999$} & 0.90 & 0.90 &0.85&0.95& 0.85&0.90 \\
    \hline \bottomrule
    \multicolumn{7}{c}{(b) GTSRB}\\
\toprule \hline
				\multicolumn{2}{l}{Setting} 
  &  benign & A2O & 20to20 & 30to30 & 40to40 & A2Ar\\
    \hline
    \multicolumn{2}{l}{$1-\beta= 0.6$} & 0.80 & 0.95 &0.80 & 0.90 &0.95 & 1.0\\
    \hline
   \multicolumn{2}{l}{$1-\beta= 0.8$} & 0.80 & 0.95 &0.80 &0.90 & 0.95& 1.0\\\hline
    \multicolumn{2}{l}{$1-\beta= 0.9$} & 0.80 & 0.95 &0.80 & 0.90 &0.90 & 1.0 \\
    \hline
    \multicolumn{2}{l}{$1-\beta= 0.95$} & 0.90 & 0.95 & 0.80 & 0.90 & 0.90 & 1.0 \\
    \hline
    \multicolumn{2}{l}{$1-\beta= 0.99$}  & 0.90 & 0.95 &0.80 &0.85&0.90 &0.95\\
    \hline
    \multicolumn{2}{l}{$1-\beta= 0.999$} & 0.90 & 0.95 &0.80 &0.85&0.90 &0.95  \\
    \hline \bottomrule
    \multicolumn{7}{c}{(c) ImageNette}\\
\toprule \hline
   \multicolumn{2}{l}{Setting} 
  &  benign & A2O & 3to3 & 5to5 & 8to8 & A2Ar\\
    \hline
    \multicolumn{2}{l}{$1-\beta= 0.6$} & 0.80 & 0.95 & 0.80 & 0.80 &0.90 &1.0 &\\
    \hline
   \multicolumn{2}{l}{$1-\beta= 0.8$} & 0.80 & 0.95 &0.80 &0.80 &0.90 &1.0 \\\hline
    \multicolumn{2}{l}{$1-\beta= 0.9$} & 0.80 & 0.95 &0.80 & 0.80 &0.80 & 1.0\\
    \hline
    \multicolumn{2}{l}{$1-\beta= 0.95$} & 0.80& 0.90 & 0.75 & 0.80 & 0.80 & 1.0 \\
    \hline
    \multicolumn{2}{l}{$1-\beta= 0.99$} & 0.80 & 0.90 &0.75 &0.75 &0.80&0.9\\
    \hline
    \multicolumn{2}{l}{$1-\beta= 0.999$} & 1.0 & 0.90&0.60 & 0.75 &0.70 &0.8 \\
    \hline \bottomrule
\end{tabular}
}
}
	\label{tab:detection_model_confidence}
\end{table*}

\newpage
\section{Supplementary of the Ablation Study}\label{sec:exp_ablation_supp}

\subsection{Baseline Variants of UMD}\label{subsec:exp_ablation_variants}

In this section, we provide details for the two baseline variants of UMD. The first baseline variant, $\text{UMD}^{\dag}$, directly uses the perturbation or patch size of the reverse-engineered trigger for each class pair for anomaly detection, without using our TR statistic. Since $\text{UMD}^{\dag}$ does not select a subset of putative backdoor class pairs, all the trigger statistics are used for the estimation of MAD, i.e.:
\begin{equation*}
\sigma^{\dag} = {\rm med}_{a\in{\mathcal Y}\times{\mathcal Y}\setminus{\mathcal B}}(|z_a^{-1} - {\rm med}_{a'\in{\mathcal Y}\times{\mathcal Y}\setminus{\mathcal B}} z_{a'}^{-1}|)
\end{equation*}
Note that the set ${\mathcal B}$ contains all class pairs with the same source class and target class -- trigger reverse-engineering is not performed for these class pairs. Then, an anomaly score is computed for each statistic by:
\begin{equation*}
r^{\dag}(z) = (z^{-1} - {\rm med}_{a\in{\mathcal Y}\times{\mathcal Y}\setminus{\mathcal B}} z_{a}^{-1}) / (1.4826 \cdot \sigma^{\dag})
\end{equation*}
If for any class pair $a$, the anomaly score $r^{\dag}(z_a)$ is larger than the confidence threshold determined by Eq. \eqref{eq:threshold}, we say that the classifier is attacked. And the $K$ class pairs with the largest anomaly score are detected as the backdoor class pairs. $K$ is the number of classes in the domain, which is also the largest number of class pairs that an X2X backdoor attack may involve.

The second baseline variant, $\text{UMD}^{\ddag}$, uses TR, but in a naive way. For each class pair $a$ {\it detected by} $\text{UMD}^{\dag}$, $\text{UMD}^{\ddag}$ performs a ``double check'' to see if the maximum ``mutual-transferability'' of $a$ with all the other class pairs, i.e. $\max_{a'\neq a}(T_{aa'}+T_{a'a})$, is in the top $K$ of all class pairs. If this is true, class pair $a$ is admitted as a backdoor class pair; otherwise, $a$ is deemed a non-backdoor class pair. Again, if there is at least one backdoor class pair being detected, the classifier is deemed to be attacked.

\subsection{Additional Results for UMD Compared with the Two Baseline Variants}\label{subsec:exp_ablation_variants_auc}

As shown in Sec. \ref{subsec:exp_ablation} of the main paper, the two baseline variants achieve an overly large false positive rate in model inference, though the confidence threshold is set for a 5\% false positive rate. Especially, the variant $\text{UMD}^{\dag}$ tends to predict any given classifier to be ``attacked''. In Tab. \ref{tab:ablation_others}, we show MIA for the two baseline variants and also our UMD for a range of confidence levels in $[0.95, 0.99, 0.999]$. Based on the results, even for extremely conservative confidence thresholds, the two baseline variants still have significantly high false positive rates (i.e. low MIA on benign classifiers). Indeed, an even larger confidence threshold may result in a meaningful false positive rate for the two variants while keeping a high true positive rate (i.e. a good separability between statistics for classifiers with and without attack), but such a threshold will be unknown to the defender {\it a priori} -- the defender will likely set a reasonable confidence level (e.g. near 95\%).

\begin{table}[h]
\centering
\caption{MIA for $\text{UMD}^{\dag}$, $\text{UMD}^{\ddag}$, and our UMD for confidence levels 0.95, 0.99, 0.999.}
        \begin{subtable}
	\centering{
		\scalebox{0.85}{%
			\begin{tabular}{c|ccccc}
				\toprule \hline
                    \multicolumn{6}{c}{$1-\beta= 0.95$}\\\hline
				& 2to2 & 5to5 & 8to8 & A2Ar & Benign\\  \hline
				$\text{UMD}^{\dag}$ & 1.0 & 1.0 &1.0  &1.0  &0 \\ \hline
                    $\text{UMD}^{\ddag}$ & 1.0 & 1.0 & 1.0 & 1.0 & 0.4 \\ \hline
				UMD & 1.0 & 1.0 & 0.9 & 0.9 & 0.9\\
                    \hline
			\end{tabular}
		}
	}
        \end{subtable}
        ~
        \begin{subtable}
	\centering{
		\scalebox{0.85}{%
			\begin{tabular}{c|ccccc}
				\toprule \hline
                    \multicolumn{6}{c}{$1-\beta= 0.99$}\\\hline
				& 2to2 & 5to5 & 8to8 & A2Ar & Benign\\  \hline
				$\text{UMD}^{\dag}$ & 1.0 & 1.0 &1.0  &1.0  &0 \\ \hline
                    $\text{UMD}^{\ddag}$ & 1.0 & 1.0 & 1.0 & 1.0 & 0.6 \\ \hline
				UMD & 1.0 & 1.0 & 0.9 & 0.9 & 0.9\\
                    \hline
			\end{tabular}
		}
	}
        \end{subtable}
        \begin{subtable}
	\centering{
		\scalebox{0.85}{%
			\begin{tabular}{c|ccccc}
				\hline
                    \multicolumn{6}{c}{$1-\beta= 0.999$}\\\hline
				& 2to2 & 5to5 & 8to8 & A2Ar & Benign\\  \hline
				$\text{UMD}^{\dag}$ & 1.0 & 1.0 &1.0  &1.0  &0 \\ \hline
                    $\text{UMD}^{\ddag}$ & 1.0 & 1.0 & 1.0 & 1.0 & 0.7 \\ \hline
				UMD & 0.9 & 1.0 & 0.9 & 0.9 & 0.9\\
                    \hline\bottomrule
			\end{tabular}
		}
	}
        \end{subtable}
	\label{tab:ablation_others}
\end{table}

\newpage
\section{Others}\label{sec:others}

\subsection{Collateral Damage for O2O Attacks}\label{subsec:others_o2o_collateral}

Since our UMD will always select at least two different class pairs for inference, we are interested in its detection capability against attacks with only one backdoor class pair, i.e. an O2O attack.
In Tab. \ref{tab:detection_advanced}, we show that UMD achieves relatively good performance against O2O attacks.
This is because, for O2O attacks, non-backdoor class pairs may suffer from collateral damage, such that samples from the source class will be misclassified to the target class when the backdoor trigger used by the attacker is embedded.
Thus, in addition to the true backdoor class pairs, there will exist effective ``backdoor'' class pairs that are not involved in the attack deliberately.
These class pairs typically share the same target class as the true backdoor class pair, since a relation between the backdoor trigger and the adversarial target class has been established when the classifier is trained on the poisoned training set.
In Fig. \ref{fig:collateral_damage}, for all ten O2O attacks, we show the histogram of the pair-based ASR (i.e. the fraction of samples from the source class being misclassified to the target class when the backdoor trigger is embedded) for all non-backdoor class pairs sharing the same target class as the true backdoor class pair.
There are several ``non-backdoor'' class pairs that have a pair-based ASR even larger than 80\%.

\begin{figure}[h]\label{fig:collateral_damage}
\centering
\includegraphics[width=.52\columnwidth]{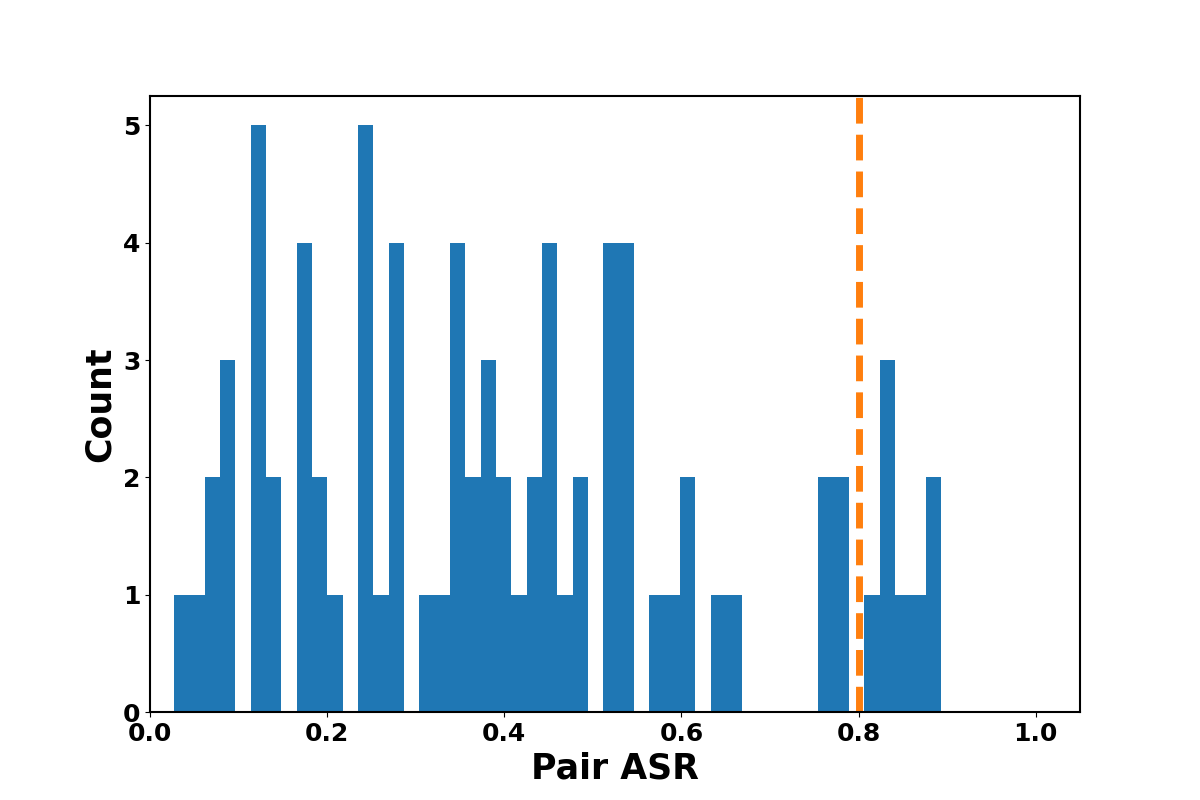}\caption{Collateral damage of O2O attacks. Some non-backdoor class pairs achieve as high as 80\% ASR due to the attack.}
\end{figure}

\subsection{Performance of UMD against X2X Attacks with Different Numbers of Images for Poisoning}\label{subsec:others_stength}

In this section, we show the performance of UMD against X2X attacks with different numbers of poisoning images. We train three groups of classifiers with the A2Ar setting and the perturbation trigger on CIFAR-10, but with 1500, 6000, and 10000 poisoning images respectively. As shown in Fig. \ref{fig:asr_num_image}, the average ASR of the attack grows with the number of poisoning samples, though still not exceeding the clean test accuracy, which is around 93\%. As shown in Tab. \ref{tab:num_poisoning_ablation}, UMD achieves generally stable detection performance for all these choices of the number of poisoning images. For attacks with 1500 poisoning images, we observe a drop in the average PDR (though still with a perfect MIA). This is because, with only 1500 images for poisoning, not every backdoor class pair achieves a sufficiently large ASR -- these class pairs are less distinguishable from non-backdoor class pairs than backdoor class pairs with high ASR, and thus are more difficult to detect.

\begin{figure}[h]\label{fig:asr_num_image}
\centering
\includegraphics[width=.52\columnwidth]{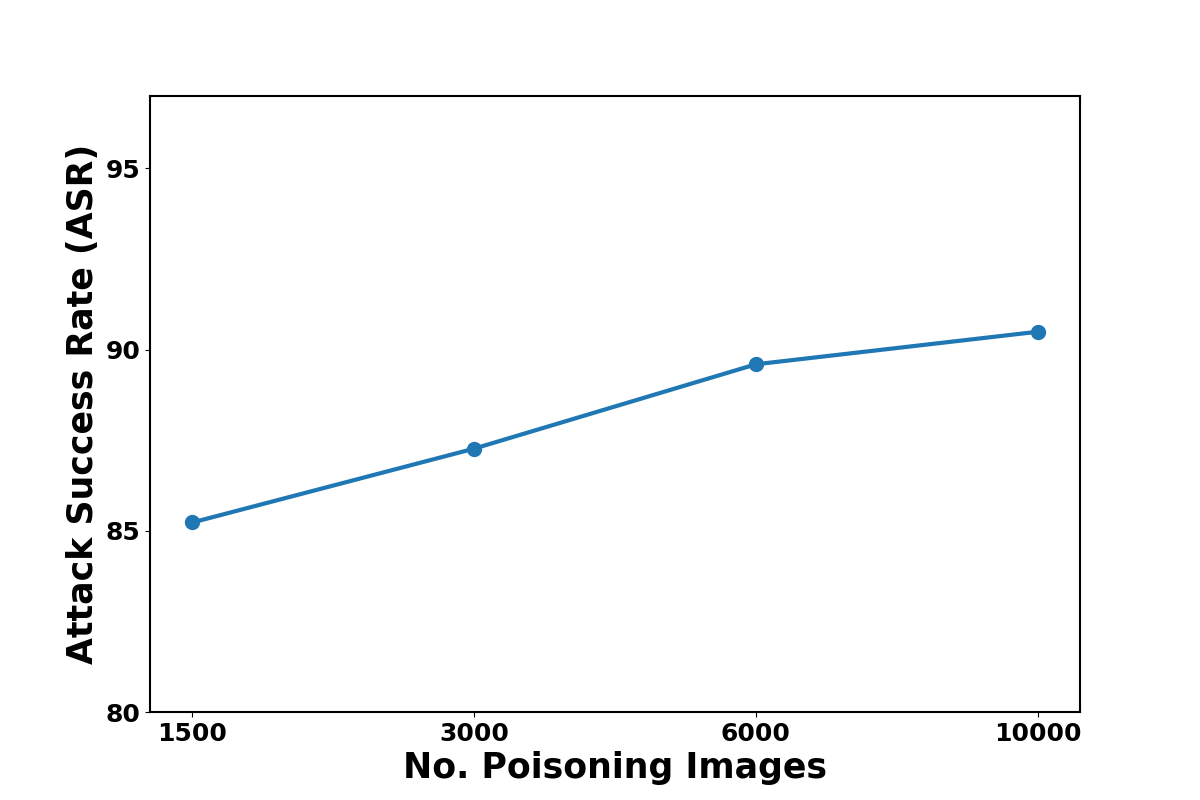}\caption{ASR for A2Ar attacks with different numbers of poisoning images on CIFAR-10.}
\end{figure}

\begin{table}[h]
	\setlength{\tabcolsep}{3.5pt}
        \caption{MIA and average PDR for UMD against A2Ar attacks on CIFAR-10 with 1500, 3000, 6000, and 10000 poisoned images. }
	\centering{
		\scalebox{1.0}{%
			\begin{tabular}{c|cccccccccc}
				\toprule \hline
				\multirow{2}*{No. images} & \multicolumn{2}{c}{1,500} & \multicolumn{2}{c}{3,000} & \multicolumn{2}{c}{6,000} & \multicolumn{2}{c}{10,000} &\\ 
                \cline{2-9}
                & MIA & PDR & MIA & PDR & MIA & PDR & MIA & PDR \\ \hline
				\textbf{UMD} & 1.0 & 0.70 & 0.90 & 0.92 & 1.0 & 0.84 & 0.90 & 0.92  \\
                \hline\bottomrule
			\end{tabular}
		}
	}
	\label{tab:num_poisoning_ablation}
\end{table}



\subsection{Intuition Behind the Objective Function in Problem (\ref{eq:clustering})}\label{subsec:others_intuition}

In Fig. \ref{fig:intuition_clustering}, we show a real TR-map for an A2A attack on CIFAR-10. Our clustering problem aims to find a ``core'' with the maximum ``brightness'' and an associated ``periphary'' with the maximum ``darkness'' \cite{core_periphery}.

\begin{figure}[t]\label{fig:intuition_clustering}
\centering
\includegraphics[width=.56\columnwidth]{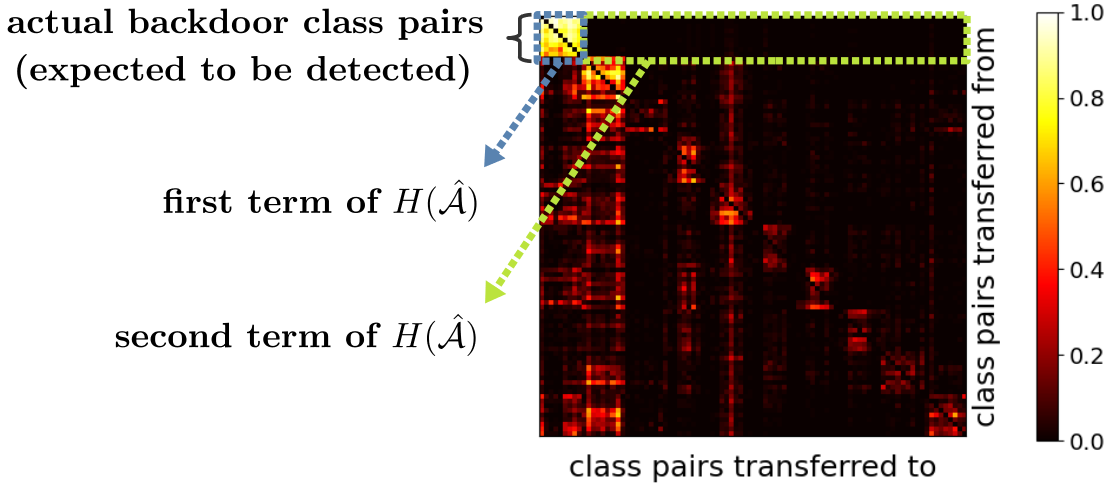}\caption{A real TR-map for an A2A attack on CIFAR-10. The orders of the 90 class pairs are the same for both axes.}
\end{figure}

\subsection{Computational Cost of UMD}\label{subsec:exp_computational_cost}

Empirically, each model inference on CIFAR-10, GTSRB, and Imagenette requires around 0.3h, 2.5h, and 4.3h, respectively, as measured on a single RTX 2080 Ti card.
As an off-line detection procedure, this time cost is acceptable compared with the training time on each dataset.
The main computational cost is induced by the need to determine for each of the $K(K-1)$ class pairs whether it is involved in a backdoor attack -- trigger reverse-engineering is performed for each class pair.
Since there is no constraint on the trigger reverse-engineering algorithm used by UMD, the efficiency of UMD can potentially be improved, e.g., by adopting the warm-up strategy by \citeauthor{Shen2021BackdoorSF} (2021) or the weighted-sum strategy by \citeauthor{L-RED} (2021) to accelerate the trigger reverse-engineering process.

\end{document}